\newtheorem{theorem}{Theorem}
\newtheorem{lemma}{Lemma}
\newtheorem{assumption}{Assumption}
\def\set@curr@file#1{\def\@curr@file{#1}} 
\title{Optimistic Policy Iteration for MDPs with Acyclic Transient State Structure}
\author{%
 Joseph Lubars\\
 University of Illinois at Urbana-Champaign\\
 \texttt{lubars2@illinois.edu}\\
 \And
 Anna Winnicki\\
 University of Illinois at Urbana-Champaign\\
 \texttt{annaw5@illinois.edu}\\
 \And
 Michael Livesay\\
 Sandia National Laboratories\\
 \texttt{mlivesa2@illinois.edu}
 \And
 R. Srikant\\
 University of Illinois at Urbana-Champaign\\
 \texttt{rsrikant@illinois.edu}\\
}
\begin{document}
\maketitle
\begin{abstract}%
 We consider Markov Decision Processes (MDPs) in which every stationary policy induces the same graph structure for the underlying Markov chain and further, the graph has the following property: if we replace each recurrent class by a node, then the resulting graph is acyclic. For such MDPs, we prove the convergence of the stochastic dynamics associated with a version of optimistic policy iteration (OPI), suggested in \cite{tsitsiklis2002convergence}, in which the values associated with all the nodes visited during each iteration of the OPI are updated. %
\end{abstract}

\section{Introduction}

Policy iteration is a key computational tool used in the study of Markov Decision Processes (MDPs) and Reinforcement Learning (RL) problems. In traditional policy iteration for MDPs, at each iteration, the value function associated with a policy is computed exactly and a new policy is chosen greedily with respect to this value function \cite{bertsekasvolI, bersekasvolII, bertsekastsitsiklis, suttonbarto}. It can be shown that using policy iteration, the value function decreases with each iteration. In the case of a finite state and action space, the optimal policy is reached in a finite number of iterations. However, computing the exact value function corresponding to each policy can be computationally prohibitive or impossible, especially in an RL setting where the MDP is unknown.

To analyze these settings, optimistic policy iteration (OPI) methods have been studied which assume that at each iteration, only a noisy estimate of the exact value function for the current policy is available. We consider the variant studied in  \cite{tsitsiklis2002convergence}, where at each iteration, we only have access to a noisy, but unbiased, estimate of the value function associated with a policy. This estimate is obtained by simulation using a Monte Carlo approach. The Markov process corresponding to a particular policy is simulated and the corresponding value function is estimated by taking the infinite sum of discounted costs. The key idea in \cite{tsitsiklis2002convergence} is to use stochastic approximation to update the value function using the noisy estimates. Their main results consider a synchronous version of OPI where the value functions of all states are updated simultaneously, but extensions to cases where an initial state is chosen randomly are discussed.

In this variant of OPI, we have a choice of updating the value associated with the initial state selected at each iteration or the values of all states visited in the Monte Carlo simulation at each iteration. In the former case, the results in \cite{tsitsiklis2002convergence} apply almost directly. In this paper, we provide a convergence proof for the latter case under some structural assumptions about the MDP. We also extend the results to the following cases:  (i) stochastic shortest-path problems (see \cite{Yuanlong} for an extension of the work in \cite{tsitsiklis2002convergence} to stochastic shortest-path problems), (ii) zero-sum games (see \cite{patekthesis} for extensions of MDP tools to zero-sum games), and (iii) aggregation, when we know apriori which states have the same value functions.

\subsection{Main Contributions and Related Work}

A common application of reinforcement learning is to games such as chess and Go. In such games, the same state cannot be reached a second time. For example, in chess, due to the rule that the game is considered to be a draw if the same state is reach thrice, each state of the chessboard is augmented to include the number of visits to the state. With this augmentation, the augmented state can only ever be visited once. Motivated by these considerations and the fact that some form of exploration is used to ensure that all states are visited during training, in this paper, we assume that every stationary policy leads to the same Markov chain graph with the following property: the transient states have an acyclic structure. Specifically, we allow recurrent classes in our model of the MDP. For such a model, we establish that the version of Optimistic Policy Iteration in \cite{tsitsiklis2002convergence} converges if the value of every visited state is updated in each iteration. 

We note that the term OPI is not standard; for example, OPI refers to a different algorithm in \cite{bertsekasvolI,bersekasvolII}. Additionally, in \cite[Section 5.4]{bertsekastsitsiklis}, the algorithm considered in this paper is referred to as the asynchronous optimistic TD(1) algorithm. However, we have chosen to call it OPI as in the paper by \cite{tsitsiklis2002convergence}.  We also note that there are a large number of reinforcement learning algorithms whose convergence has been studied and established; see \cite{bertsekas2019reinforcement}. However, the algorithm studied in \cite{tsitsiklis2002convergence} is somewhat unique: at each iteration, one follows the entire trajectory of a greedy policy from each state to estimate the value of the policy and uses the estimate of the cost of the trajectory (from each state) to update the value function. To the best of our knowledge, the convergence of the asynchronous version of such an updating scheme has not been studied in the literature and is mentioned as an open problem in \cite{tsitsiklis2002convergence}. A similar update is also used in the famous AlphaZero algorithm \cite{silver2017mastering} where a game is played to conclusion and the values of all the states encountered are updated based on the outcome of the game. We note, however, that AlphaZero has many other features which are not studied here.

We first present our results for standard MDPs. Since our structural assumption on the MDP is motivated by games, we extend our results to zero-sum games later.  Additionally, since most RL algorithms for large MDPs use some form of function approximation to estimate the value function, we also extend our results to a very special case of function approximation, namely, state aggregation. When we consider state aggregation, we assume that all states in a cluster belong to the same level (i.e., same depth from the root). This assumption is similar to function approximation algorithms for finite-horizon MDPs where a separate function is used for each time step; see \cite{Jordan, jin2020provably}.

\section{Definitions and Assumptions}

Let $X$ be a discounted Markov Decision Process (MDP) with discount factor $\alpha \in (0, 1)$ and finite state space $S = \{1, \ldots, n\}$. Denote the finite action space associated with state $i \in S$ by $\scriptA(i)$. When action $u \in \scriptA(i)$ is taken at state $i$, we let $P_{ij}(u)$ be the probability of transitioning from state $i$ to state $j$. For every state and action pair, $(i, u),$ we are also given a finite, deterministic cost $c(i, u)$, $c \geq 0$, of being in state $i$ and taking action $u$.

A policy $\mu$ is a mapping $\mu: S \to \cup_{i\in S} \scriptA(i)$. Policy $\mu$ induces a Markov chain $X^\mu$ on $S$ with transition probabilities
\begin{align*}
\P(X_{k+1}^{\mu} = j | X_k^{\mu} = i) = P_{ij}(\mu(i)) \quad \forall i, j \in S, 
\end{align*} where $X_k^\mu$ is the state of the Markov chain after $k \in \mathbb{N}$ time steps. %

We assume that the distribution for the initial state $X_0^{\mu}$ is $p$ for all policies $\mu$. The distribution $p$ and $P_{ij}(\mu(i)) \text{ } \forall i,j \in S$ determine $q_\mu(i)$, the probability of Markov chain $X^{\mu}$ ever reaching state $i$. In other words,
\begin{equation}
    P(X_k^{\mu} = i \text{ for some } k, 0 \leq k < \infty) = q_\mu(i). \label{q_mu_def}
\end{equation}
In order to ensure sufficient exploration of all of the states, we assume the following:
\begin{assumption}\label{assumption_reachability}
$q_\mu(i) > 0 \;\forall \mu, i.$
\end{assumption} 

Since there are finitely many policies, there exists $\delta$ such that $q_\mu \geq \delta > 0.$ Furthermore, we make the following assumption about state transitions in our MDP:
\begin{assumption}\label{assumption_common_transitions}
    For any states $i, j \in S$ and actions $u, v \in \scriptA(i)$, $P_{ij}(u) > 0$ if and only if $P_{ij}(v) > 0$. 
\end{assumption}
Thus, the set of states that can be reached from any state in one step is the same under any policy. The above assumptions are usually satisfied in practice since one explores all actions with at least some small probability in each state; examples of such exploration strategies include epsilon-greedy and Boltzmann explorations. Given this assumption, we can define a one-step reachability graph of our MDP independently of any policy. We define the reachability graph as the directed graph $G = (S, E)$ where $S = \{1, \ldots, n\}$ and $E = \{(i,j): P_{ij}(\mu(i))>0 \text{ for some } \mu\}$. 

We now further classify $S$ into transient and recurrent classes as follows: 
\begin{equation*}
    S = \scriptT \sqcup \scriptR_1 \sqcup \scriptR_2 \sqcup \ldots \sqcup \scriptR_m 
\end{equation*}
Here, $\scriptT = 1, \ldots, L$ where $L < n$ is the set of transient states and $\scriptR_1, \scriptR_2, \ldots, \scriptR_m$ are disjoint, irreducible, closed recurrent classes.  Assumption \ref{assumption_common_transitions} allows us to drop the dependence on policy $\mu$ in the decomposition.

We are now ready to state our third assumption, which is also illustrated in Figure \ref{fig:assumption_acyclic}.

\begin{assumption}\label{assumption_acyclic_transient}
The subgraph of the reachability graph induced by the set of transient states $\scriptT$ which we denote by $G(\scriptT)$ is acyclic.
\end{assumption}

\begin{figure}
    \centering
    \includegraphics[width=0.5\textwidth]{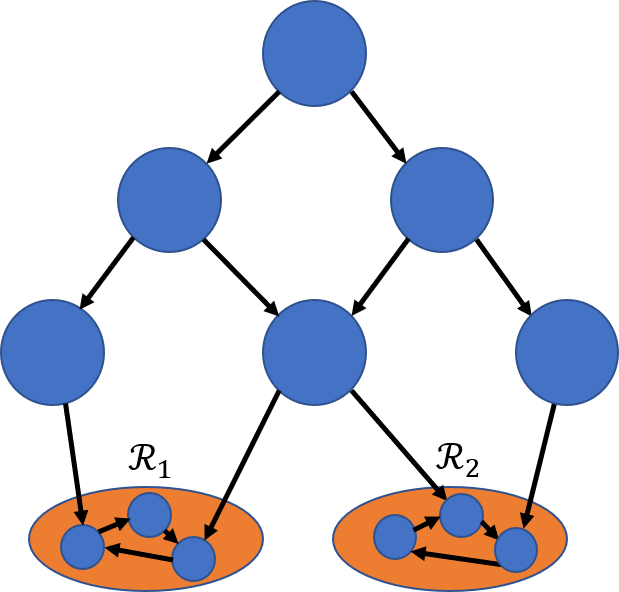}
    \caption{An illustration of Assumption \ref{assumption_acyclic_transient}. The reachability graph contains several recurrent classes (here, in orange), and the remainder of the reachability graph is acyclic.}
    \label{fig:assumption_acyclic}
\end{figure}

Although restrictive, this assumption naturally arises in some problems. For example, many existing works, such as \cite{Jordan}, assume a finite time horizon. They augment the state with a time-dependent parameter, naturally making the state transitions acyclic, as it is impossible to transition to a state-time pair with the time being in the past.

\section{Reinforcement Learning Preliminaries}\label{RLPre}
To define and analyze our algorithm, we will need several standard definitions and results from dynamic programming and reinforcement learning. First, we define the cost-to-go or value function $J^{\mu}(i)$ $\forall i \in S$ as the expected  cumulative discounted cost when following policy $\mu$, starting from state $i$:
\begin{align*}
J^{\mu} (i)= \E \Big [ \sum_{k = 0}^\infty \alpha^k c(X_k^{\mu}, \mu(X_k^{\mu})) | X_0^{\mu} = i \Big ]. 
\end{align*}
It can be shown that $J^{\mu}$  solves the Bellman equation:
\begin{equation}
	J^{\mu}(i) = c(i, \mu(i)) + \alpha \sum_{j=1}^n P_{ij}(\mu(i)) J^\mu(j). \label{eq:bellmaneq}
\end{equation}
Now, we define an optimal policy, $\mu^*$, to be a policy that solves $J^*:=\min_\mu J^\mu$. Under our assumptions, $\mu^*$ always exists. $J^*$ is known as the optimal value function and satisfies the following Bellman equation:

\begin{equation}
	J^*(i) = \min_{u \in \scriptA(i)} \Bigg[ c(i, u) +  \alpha \sum_{j=1}^n P_{ij}(u) J^*(j) \Bigg ]. \label{Bell}
\end{equation}

For an arbitrary vector, we introduce the optimal Bellman operator:
\begin{equation}
   (T J)(i) = \underset{u \in \scriptA(i)}\min \Bigg[ c(i, u) + \alpha \sum_{j=1}^n P_{ij}(u) J(j)\Bigg] \label{eq:Topt}.
\end{equation}

Our primary goal is to find $J^*$ and $\mu^*$. Towards the objective, we introduce the Bellman operator $T_\mu: \mathbb{R}^n \to \mathbb{R}^n$ where for $J \in \mathbb{R}^n,$ the $i$th component of $T_{\mu}J$ is
\begin{align}
(T_\mu J)(i) = c(i, \mu(i)) + \alpha \sum_{j=1}^n P_{ij}(\mu(i)) J(j), \label{bellmanop} 
\end{align} so that \eqref{eq:bellmaneq} can be written as $J^\mu = T_{\mu} J^{\mu}$.

Policy iteration is a basic iterative algorithm for finding $J^*$ and $\mu^*$. Each iteration starts with an estimate of the value function $J_t$ and then performs ``policy improvement'' to produce a policy $\mu_t$ and ``policy evaluation'' to produce the next estimate of the value function $J_{t+1}$. Policy improvement finds the greedy policy with respect to $J_t$ by solving $\mu_{t} = \arg\min_{\mu} T_{\mu} J_t$. Policy evaluation finds the value function $J^{\mu_t}$ of the current policy by solving the Bellman equation \eqref{eq:bellmaneq}, and sets $J_{t+1} = J^{\mu_t}$. The key to convergence is that $J_t$ strictly improves at every step, in the sense that $J_{t+1} \leq J_t$, with equality if and only if $\mu_t = \mu^*$ and $J_t = J^*$. Since $\mu_t$ belongs to a finite set, policy iteration is guaranteed to converge in a finite number of iterations.

Calculating $J^{\mu_t}$ in each step of policy iteration can be computationally expensive and the results of policy iteration cannot be easily extended when the probabilities of transitioning between states and costs are not known, so \emph{optimistic policy iteration} refers to a variant of policy iteration where some approximation of $J^{\mu_t}$ is used instead of calculating $J^{\mu_t}$ directly. In \cite{tsitsiklis2002convergence}, assuming that $p_{ij}(u)$ are known for all $i, j \in S$ and $u \in \scriptA(i)$ and that $c(i, u)$ are known for all $i \in S$ and $u \in \scriptA(i)$, it was shown that an optimistic policy iteration algorithm using Monte Carlo simulations for policy evaluation converges to $J^*$. Here, we consider a variant suggested in \cite{tsitsiklis2002convergence} which can lead to faster convergence. 

\section{The Algorithm}\label{TheAlgorithm}

The algorithm we consider is as follows. Like policy iteration, we start with an initial vector $J_0=0$ and iteratively update $J_t$ for all $t$. For each update at time $t$, we take vector $J_t$ and obtain 
\begin{align}
\mu_t = \arg\min_{\mu}(T_\mu J_t), \label{greedypol}
\end{align} which is the greedy policy with respect to $J_t$. Then, the algorithm independently selects a state according to nonuniform probabilities $p(i), i \in S$. We then simulate a trajectory that starts at state $i$ and follows policy $\mu_t$ at time $t$. The trajectory is a realization of a Markov chain $X_k^{\mu_t}$ where $k \in \mathbb{N}$ and $X_0^{\mu_t} = i$.

Instead of using \eqref{eq:bellmaneq} to compute $J^{\mu_t}$, we use this trajectory to generate an unbiased estimate $\tilde{J}^{\mu_t}$ of $J^{\mu_t}$ using the tail costs of the first time each state is visited by the trajectory. 

To formalize $\tilde{J}^{\mu_t}(i)$, we introduce the hitting time $N_t(i)$ of state $i$ in the trajectory $X_k^{\mu_t}$ as follows:
\begin{align*}
N_t(i) := \inf\{n: X_n^{\mu_t} = i\}.
\end{align*}
When $N_t(i)$ is finite, $\tilde{J}^{\mu_t}(i)$ can be defined in terms of $N_t(i)$ as
\begin{align*}
    \tilde{J}^{\mu_t}(i) := \sum_{k=N_t(i)}^\infty \alpha^{k-N_t(i)} c(X_k^{\mu_t}, \mu_t(X_k^{\mu_t})).
\end{align*} Otherwise, $\tilde{J}^{\mu_t}(i) = 0$. Then, for every state visited by the trajectory, $X_k^{\mu_t}$, we update $J_{t+1}$ as follows:
\begin{equation}
J_{t+1}(i) =  
	      \begin{cases}
               (1-\gamma_t(i))J_t(i) + \gamma_t(i)\tilde{J}^{\mu_t}(i) & \text{ if $i \in X_k^{\mu_t} $}\\
               J_t(i) & \text{ if $i \notin X_k^{\mu_t}$},
            \end{cases} \label{eq:update0}
\end{equation}
where $\gamma_t(i)$ is a component-dependent step size. Recall that $J_0$ is a deterministic vector. In order to analyze this algorithm, it is helpful to rewrite it in a form similar to a stochastic approximation iteration. We introduce a random variable $w_t$ to capture the noise present in $\tilde{J}^{\mu_t}(i)$. When $i \notin X^{\mu_t}_k$, we define $w_t(i)=0$. Otherwise, we let $w_t = \tilde{J}^{\mu_t}(i) - J^{\mu_t}(i)$. With this choice, we can rewrite our iterates as
\begin{equation}
J_{t+1}(i) =  
	      \begin{cases}
               (1-\gamma_t(i))J_t(i) + \gamma_t(i)(J^{\mu_t}(i) +  w_t(i))& \text{  if $i \in X_k^{\mu_t}$},\\
               J_t(i) & \text{ otherwise}.\\
            \end{cases} \label{eq:updatewithoutv}
\end{equation}

We now introduce a random variable $v_t$ which incorporates the randomness present in the event $i \in X_k^{\mu_t}$, similar to the random variable $v_t$ used in \cite{tsitsiklis2002convergence}, and rewrite \eqref{eq:updatewithoutv} as  
\begin{equation}
J_{t+1}(i) = (1-q_{\mu_t}(i)\gamma_t(i))J_t(i) + q_{\mu_t}(i)\gamma_t(i)(J^{\mu_t}(i) +  w_t(i) + v_t(i)) \label{eq:updatewithv}
\end{equation} where 
\begin{align*}
v_t(i) = \frac{1}{q_{\mu_t}(i)} (\I_{i \in X_k^{\mu_t}} - q_{\mu_t}(i) ) (J^{\mu_t}(i) +  w_t(i) - J_t(i) ).
\end{align*} 
Recall that $q_{\mu_t}(i)$ is the probability of ever reaching node $i$ using policy $\mu_t$.

\section{Main Result}
The main result of our paper is establishing the convergence of the above algorithm. However, in order to establish convergence, we have to specify the step size $\gamma_t(i).$ We consider two choices of step sizes: deterministic, state-independent step sizes and state-dependent step sizes which decrease when state $i$ is visited. These step sizes are assumed to satisfy fairly standard assumptions for stochastic approximation algorithms. We assume there is some deterministic function $\beta: \mathbb{N} \to \R^+$ such that
\begin{equation*}
    \sum_{t=0}^\infty \beta(t) = \infty, \qquad \sum_{t=0}^\infty \beta^2(t) < \infty,
\end{equation*}
and we assume that there exists some constant $T$ such that $\beta(t)$ is nonincreasing for $t > T$. Then, our choices of step sizes are:
\begin{itemize}
    \item Deterministic step size $\gamma_t(i) = \beta(t)$: This choice is simple to implement and does not depend on state $i$, but may converge slower than necessary since states that are rarely visited will have the same stepsize as states that are visited frequently, which potentially yields faster convergence for states that are frequently visited but slower convergence for states that are rarely visited. The condition that $\beta(t)$ is nonincreasing for large $t>T$ is not necessary for this case.
    \item State-dependent step size $\gamma_t(i) = \beta(n_t(i))$. Here, $n_t(i)$ is the number of times state $i$ was ever reached before time $t$ ($n_t = \sum_{\tau < t} \I_{i \in X_k^{\mu_\tau}}$), where $\I$ represents the indicator function. Thus, we only change the step size for state $i$ when state $i$ is visited. 
\end{itemize} 

Given either choice of step size, we will show that our algorithm converges:

\begin{theorem}\label{main_theorem}
If $J_t$ is defined as in \eqref{eq:update0} and $\gamma_t(i) = \beta(n_t(i))$ or $\gamma_t(i) = \beta(t)$, then $J_t $ converges almost surely to $J^*$.
\end{theorem}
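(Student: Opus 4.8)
\emph{Proposed proof plan.} The plan is to read the recursion \eqref{eq:updatewithv} as an asynchronous stochastic approximation iteration whose target is the policy value $J^{\mu_t}$, and to use Assumption \ref{assumption_acyclic_transient} to organize the argument as an induction over the condensation of the reachability graph, in which every recurrent class is collapsed to a single node so that the resulting graph is acyclic. First I would set up the stochastic approximation machinery. Fix the filtration $\mathcal{F}_t$ generated by everything determined up to the choice of $\mu_t$. Both noise terms are conditionally mean zero: $\E[w_t(i)\mid\mathcal{F}_t]=0$ because the discounted tail cost from the first visit to $i$ is an unbiased estimator of $J^{\mu_t}(i)$, and $\E[v_t(i)\mid\mathcal{F}_t]=0$ because $\E[\I_{i\in X_k^{\mu_t}}\mid\mathcal{F}_t]=q_{\mu_t}(i)$. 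Their conditional second moments are bounded using that $c$ is finite with $c\ge 0$, that $\alpha<1$, the bound $1/q_{\mu_t}(i)\le 1/\delta$ from Assumption \ref{assumption_reachability}, and boundedness of $J_t$; the last of these follows because there are finitely many policies, so $C:=\max_\mu\norm{J^\mu}_\infty<\infty$ and every target lies in $[0,C]$, confining $\{J_t\}$ to a bounded set almost surely. Writing the effective step size as $\hat\gamma_t(i)=q_{\mu_t}(i)\gamma_t(i)$, the bound $q_\mu\ge\delta>0$ and the Robbins--Monro conditions on $\beta$ give $\sum_t\hat\gamma_t(i)=\infty$ and $\sum_t\hat\gamma_t(i)^2<\infty$ almost surely; for $\gamma_t(i)=\beta(n_t(i))$ this uses that each state is reached infinitely often, so $n_t(i)\to\infty$.

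The core of the argument is an induction over the condensation in reverse topological order, so that recurrent classes (the sinks) are handled first. A recurring tool is the identity
\begin{equation*}
J^{\mu_t}-J_t=(I-\alpha P_{\mu_t})^{-1}(TJ_t-J_t),
\end{equation*}
which holds because $\mu_t$ is greedy, i.e.\ $T_{\mu_t}J_t=TJ_t$, where $P_{\mu_t}$ is the transition matrix induced by $\mu_t$; since $(I-\alpha P_{\mu_t})^{-1}=\sum_{k\ge 0}\alpha^k P_{\mu_t}^{\,k}$ is entrywise nonnegative, it converts sub- and super-solution inequalities for $T$ into the same inequalities for the target $J^{\mu_t}$. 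For the base case of a recurrent class $\scriptR_\ell$, closedness makes both the greedy policy and the value function on $\scriptR_\ell$ depend only on $J_t$ restricted to $\scriptR_\ell$, so the sub-MDP is self-contained; here there is no ``downstream'' to exploit, and I would prove $J_t\to J^*$ on $\scriptR_\ell$ by a monotone sandwiching in the spirit of \cite{tsitsiklis2002convergence}, trapping the iterates between deterministic envelopes built from iterates of $T$ restricted to $\scriptR_\ell$ and shrinking them to $J^*$ using the contraction of $T$ and the identity above.

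For the inductive step at a transient state $i$, assume $J_t\to J^*$ on the set $D_i$ of states reachable from $i$, a set that is invariant under the dynamics. By the identity above, $TJ_t-J_t\to 0$ on $D_i$, whence $J^{\mu_t}\to J^*$ on $D_i$ as well. Expanding $J^{\mu_t}(i)=c(i,\mu_t(i))+\alpha\sum_{j\in D_i}P_{ij}(\mu_t(i))J^{\mu_t}(j)$ and using that $\mu_t$ is greedy, so that $c(i,\mu_t(i))+\alpha\sum_{j}P_{ij}(\mu_t(i))J_t(j)=(TJ_t)(i)\to J^*(i)$, gives $J^{\mu_t}(i)\to J^*(i)$. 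Since acyclicity ensures $i$ is visited at most once per trajectory, the scalar recursion \eqref{eq:updatewithv} for $J_t(i)$ is a stochastic approximation with conditionally mean-zero, square-summable noise and a target tending to $J^*(i)$, so $J_t(i)\to J^*(i)$. Completing the induction over all nodes of the condensation yields $J_t\to J^*$ on all of $S$.

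The hard part is the moving, non-contractive target: the map $J\mapsto J^{\mu(J)}$ is not a sup-norm contraction, so off-the-shelf asynchronous stochastic approximation theorems do not apply, and one must instead propagate monotone envelopes. Assumption \ref{assumption_acyclic_transient} is exactly what makes this propagation well-founded, since the envelope for a transient component can be closed only after every component it can reach has been pinned to $J^*$. A secondary difficulty, and the reason the ``update all visited states'' variant is harder than the version analyzed in \cite{tsitsiklis2002convergence}, is the strong correlation of the visitation noise $v_t$ across the many states updated simultaneously from one trajectory; controlling this correlation, especially inside a recurrent class where states are revisited, is where I expect most of the technical work to concentrate.
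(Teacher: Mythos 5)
Your overall architecture matches the paper's: handle the recurrent classes first by observing that, restricted to a class $\scriptR_j$ and to the (infinitely many) iterations that visit it, the update is exactly the synchronous OPI of \cite{tsitsiklis2002convergence}, and then induct along a reverse topological order of the acyclic transient subgraph, treating each transient state's update as a scalar stochastic approximation. The one substantive difference is how you handle the moving target at a transient state $x$: you push $J_t\to J^*$ on the descendants through the resolvent identity $J^{\mu_t}-J_t=(I-\alpha P_{\mu_t})^{-1}(TJ_t-J_t)$ to conclude $J^{\mu_t}(x)\to J^*(x)$, and then invoke a stochastic approximation lemma with asymptotically vanishing bias. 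The paper instead uses a policy-convergence lemma (Lemma \ref{lem_policy_conv}): once the neighbors' values have converged, the greedy action at $x$ is \emph{exactly} optimal after a finite random time $T(x)$, so conditioning on $\{T(x)=n\}$ and comparing with auxiliary sequences $Z^{(n)}_t$ reduces the problem to a constant-target iteration covered by the off-the-shelf Lemma \ref{lem_stochastic_approx}. Both routes work; yours needs a slightly stronger SA lemma (tolerating a random, a.s.-vanishing bias), while the paper's needs the countable union bound over $n$. Your concern about cross-state correlation of $v_t$ is moot in either version, since the analysis is carried out one scalar component at a time.

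There is, however, one genuine gap: your justification of the step-size conditions for $\gamma_t(i)=\beta(n_t(i))$. You assert that $\sum_t \hat\gamma_t(i)^2<\infty$ ``uses that each state is reached infinitely often, so $n_t(i)\to\infty$,'' but $n_t(i)\to\infty$ is not sufficient for $\sum_t\beta^2(n_t(i))<\infty$. If $n_t(i)$ grew sublinearly, say like $\sqrt{t}$, then with $\beta(t)=1/t$ one would get $\beta^2(n_t(i))\asymp 1/t$, which is not summable. What is actually needed is a \emph{linear} lower bound on $n_t(i)$, and this is precisely the content of the paper's Lemma \ref{lem_step_size}: one shows that $R_t=\sum_{\tau\le t}\bigl(\I_{i\in X_k^{\mu_\tau}}-q_{\mu_\tau}(i)\bigr)/\tau$ is an $L^2$-bounded martingale, applies the martingale convergence theorem and Kronecker's lemma to get $n_t(i)/t-\tfrac1t\sum_{\tau\le t}q_{\mu_\tau}(i)\to 0$, and then uses $q_{\mu_\tau}(i)\ge\delta$ to conclude $n_t(i)\ge\lfloor t\delta/2\rfloor$ eventually, whence $\sum_t\beta^2(n_t(i))\le(\tfrac2\delta+1)\sum_t\beta^2(t)<\infty$ (using that $\beta$ is eventually nonincreasing). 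Without this quantitative visitation estimate your verification of condition 5 of the SA lemma fails for the state-dependent step size, which is exactly the case the paper identifies as the harder one. The rest of your plan (unbiasedness and bounded second moments of $w_t+v_t$, boundedness of $J_t$, the lower bound $q_{\mu_t}(i)\ge\delta$ for the divergence condition) is consistent with Lemmas \ref{lem_noise} and \ref{lem_step_size}.
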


It turns out that proving the convergence of the second type of step size is more challenging than the corresponding proof for the first type of step size. However, in practice, the second type of step size leads to much faster convergence and hence, it is important to study it. We observed in simulations that the first step size rule is infeasible for problems with a large number of states since the convergence rate is very slow. Therefore, in our simulations, we use the second type of step size rule to compare the advantages of updating the value function for each state visited along a trajectory over updating the value function for just the first state in the trajectory.

\cite{tsitsiklis2002convergence} considers a case where $p$ is nonuniform and the value for only the initial state $i = X_0^{\mu_t}$ is updated in each iteration. Our algorithm discards less information than that of \cite{tsitsiklis2002convergence}, but we require stronger assumptions on the MDP structure.

\section{Proof of the Main Result}





The key ideas behind our proof are the following. Once a state in a recurrent class is reached in an iteration, every state in that class will be visited with probability one in that iteration. Thus, if there is a non-zero probability of reaching every recurrent class, then each recurrent class is visited infinitely many times, and the results in \cite{tsitsiklis2002convergence} for the synchronous version of the OPI can be applied to each recurrent class to show the convergence of the values of the states in each such class. Next, since the rest of the graph is an acyclic graph, by a well-known property of such graphs, the nodes (states of the Markov chain) can be arranged in a hierarchy such that one can inductively show the convergence of the values of these nodes. At each iteration, we have to show that the conditions required for the convergence of stochastic approximation are satisfied. If the step-sizes are chosen to be state-independent, then they immediately satisfy the assumptions required for stochastic approximation. If the step-sizes are state-dependent, then a martingale argument shows that they satisfy the required conditions. We also verify that the noise sequence in the stochastic approximation algorithm satisfies the required conditions.

\subsection{Convergence for recurrent states}

Recall that our states can be decomposed as $S = \scriptT \sqcup \scriptR_1 \sqcup \scriptR_2\sqcup \ldots \sqcup \scriptR_m$, where the $\scriptR_j \forall j=1, \ldots, m$ are closed, irreducible recurrent classes under any policy. To show convergence of our algorithm, we will first show that the algorithm converges for each recurrent class $\scriptR_j$, then use this fact to show convergence for the transient states $\scriptT$. The proof will differ slightly for our two choices of the step size $\gamma_t(i)$, so we will consider each case separately.

\subsubsection{Step size $\gamma_t(i) = \beta(n_t(i))$}
Consider our iterative updates, restricted to the set of states $\scriptR_j$. Since $\scriptR_j$ is a closed, irreducible recurrent class, once any state in $\scriptR_j$ is visited, so will every other state. Recall the version of our state update without $v_t$ given by \eqref{eq:updatewithoutv} under policy $\mu_t$. Using our choice of $\gamma_t(i)$, the update has exactly the same step size for every state in $\scriptR_j$. We define $n_t(\scriptR_j)$ as the shared $n_t(i)$ for each state $i \in \scriptR_j$, and then for states $i \in \scriptR_j$, \eqref{eq:updatewithoutv} becomes:
\begin{equation*}
J_{t+1}(i) =  
	      \begin{cases}
               (1-\beta(n_t(\scriptR_j)))J_t(i) + \beta(n_t(\scriptR_j))(J^{\mu_t}(i) +  w_t(i))& \text{  if $N_t(i) < \infty$}\\
               J_t(i) & \text{ otherwise}\\
            \end{cases}
\end{equation*}

Now, consider only the steps $t_1, t_2, \ldots$ of the algorithm such that $\scriptR_j$ is visited by the trajectory $X_k^{\mu_t}$, so $n_{t_k}(\scriptR_j) = k-1$. Given our choice of step size, the above update becomes
\begin{equation*}
J_{t_{k+1}}(i) = (1-\beta(k-1))J_{t_k}(i) + \beta(k-1)(J^{\mu_{t_k}}(i) +  w_{t_k}(i)),
\end{equation*}
where the noise $w_{t_k}(i)$ only depends on the evolution of $X_k^{\mu_{t_k}}$ in the recurrent class $\scriptR_j$. This is identical to the algorithm considered by Tsitsiklis in \cite{tsitsiklis2002convergence}. Noting that $\sum_{k=1}^\infty \beta(k-1) = \infty$ and $\sum_{k=1}^\infty \beta^2(k-1) < \infty$ by our assumptions on $\beta$, by Proposition 1 from Tsitsiklis, we have that $J_t(i) \cas J^*(i)$ for all $i \in \scriptR_j$.

\subsubsection{Step size $\gamma_t(i) = \beta(t)$}

Again, consider our iterative updates restricted to $\scriptR_j$. We define $q_{\mu_t}(\scriptR_j)$ as the common probability of reaching any state in $\scriptR_j$. Then, we adapt the version of the update containing the noise term $v_t$ from \eqref{eq:updatewithv} into an update for each state in $\scriptR_j$ using our choice of $\gamma_t$:
\begin{equation*}
J_{t+1}(i) = \left(1-\beta(t) q_{\mu_t}(\scriptR_j)\right)J_t(i) + \beta(t) q_{\mu_t}(\scriptR_j)(J^{\mu_t}(i) +  w_t(i) + v_t(i))
\end{equation*}
The convergence of the above algorithm essentially follows from \cite{tsitsiklis2002convergence} with a minor modification. Since we have assumed that $q_{\mu_t}(\scriptR_j)$ is lower bounded, even though the step sizes are random here, the stochastic approximation results needed for the result in \cite{tsitsiklis2002convergence} continue to hold.

\subsection{Convergence for transient states} \label{Convergence for transient states}

Since the reachability graph $G$ restricted to transient states is a directed acyclic graph, it admits a reverse topological sort of its vertices $(x_0, x_1, x_2, \ldots, x_L)$, such that for each $i,j \leq L$, if $(x_i, x_j) \in E$ then $i > j$ (for reference, see \cite{topological}). We will inductively prove that $J_t(x_i) \cas J^*(x_i)$ for all $i \leq L$. 

We begin our induction with $x_0$. Since $x_0$ is transient, it must have at least one neighbor, and because it is first in the topological sort, its only neighbors $N(x_0)$ in $G$ are members of recurrent classes. From the previous section, we know that for all such neighbors $j$, $J_t(j) \cas J^*(j)$. Since these neighboring value functions converge to the optimal value, one can show that the greedy policy at state $x_0$ converges to an optimal policy. For convenience, we present this result as a lemma. A similar result is proved in Proposition 4.5 and Corollary 4.5.1 in \cite{bertsekas1978stochastic}.

\begin{lemma}\label{lem_policy_conv}
	For any state $x$, let $N(x)$ be the set of its neighbors in the reachability graph $G$. Suppose that for all $i \in N(x), J_t(i) \to J^*(i)$. Then, there exists a finite time T for which $\mu_t(x) = \mu^*(x)$ for all $t \geq T$.
\end{lemma}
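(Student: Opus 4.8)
The plan is to reduce the statement to the stabilization of a finite $\arg\min$. First I would recall from \eqref{greedypol} that the greedy action at $x$ minimizes the one-step cost-to-go,
\[
\mu_t(x) = \arg\min_{u \in \scriptA(x)} \Big[ c(x,u) + \alpha \sum_{j=1}^n P_{xj}(u) J_t(j) \Big].
\]
By Assumption \ref{assumption_common_transitions}, the only $j$ with $P_{xj}(u) > 0$ for some $u$ are precisely the neighbors $N(x)$, so this minimand depends on $J_t$ only through $\{J_t(j) : j \in N(x)\}$. Writing
\[
Q_t(x,u) := c(x,u) + \alpha \sum_{j \in N(x)} P_{xj}(u) J_t(j), \qquad Q^*(x,u) := c(x,u) + \alpha \sum_{j \in N(x)} P_{xj}(u) J^*(j),
\]
the hypothesis $J_t(j) \to J^*(j)$ for each $j \in N(x)$ together with finiteness of the sum gives $Q_t(x,u) \to Q^*(x,u)$ for each of the finitely many $u \in \scriptA(x)$, while $\mu^*(x) = \arg\min_u Q^*(x,u)$ by \eqref{Bell}.

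The remaining step is a standard gap argument. Setting $u^* = \mu^*(x)$ and
\[
\Delta := \min_{u \in \scriptA(x) : Q^*(x,u) > Q^*(x,u^*)} \big( Q^*(x,u) - Q^*(x,u^*) \big) > 0,
\]
(strict positivity holding because $\scriptA(x)$ is finite), I would invoke $Q_t(x,u) \to Q^*(x,u)$ for all $u$ to find a finite $T$ with $|Q_t(x,u) - Q^*(x,u)| < \Delta/2$ for every $t \geq T$ and every $u$. For such $t$, each strictly suboptimal $u$ obeys $Q_t(x,u) > Q^*(x,u^*) + \Delta/2 > Q_t(x,u^*)$, so the minimizer in \eqref{greedypol} is $u^*$, i.e.\ $\mu_t(x) = \mu^*(x)$.

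The one point requiring care, which I expect to be the main obstacle, is that $\arg\min_u Q^*(x,u)$ need not be a singleton: if several actions attain the minimal $Q^*(x,\cdot)$, the greedy rule may keep switching among them and $\mu_t(x)$ need not equal any fixed action. I would handle this by imposing a consistent tie-breaking convention (say, smallest action index) shared by the greedy map \eqref{greedypol} and the definition of $\mu^*$, so that $\Delta$ above is taken only over strictly suboptimal actions and the gap argument goes through verbatim; if there are no strictly suboptimal actions the conclusion is immediate. Note that the induction in Section \ref{Convergence for transient states} only uses that the greedy policy at $x$ is eventually optimal, so even the weaker conclusion ``$\mu_t(x)$ is optimal for all $t \geq T$'' would suffice there.
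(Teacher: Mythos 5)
Your argument is correct, and it is the standard one. Note that the paper does not actually supply a proof of Lemma~\ref{lem_policy_conv}: it states the lemma ``for convenience'' and points to Proposition 4.5 and Corollary 4.5.1 of \cite{bertsekas1978stochastic} for a similar result, so there is no in-paper proof to compare against. Your write-up fills that gap with exactly the intended reasoning: by Assumption~\ref{assumption_common_transitions} the minimand in \eqref{greedypol} at $x$ depends on $J_t$ only through $\{J_t(j): j \in N(x)\}$, the finitely many $Q$-values converge, the set of optimal actions at $x$ is $\arg\min_u Q^*(x,u)$ by \eqref{Bell}, and a positive gap $\Delta$ over strictly suboptimal actions forces the greedy choice into the optimal set for all large $t$.

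Your caveat about non-unique minimizers is a genuine and worthwhile observation: as literally stated, ``$\mu_t(x) = \mu^*(x)$ for all $t \geq T$'' can fail without a fixed tie-breaking rule, since the greedy selection could oscillate among several actions all attaining $\min_u Q^*(x,u)$. Either of your fixes (a consistent tie-break shared by \eqref{greedypol} and $\mu^*$, or weakening the conclusion to ``$\mu_t(x)$ is an optimal action for all $t \geq T$'') is adequate, and the weaker conclusion is indeed all that the induction in Section~\ref{Convergence for transient states} uses --- the paper itself hedges in that direction by speaking of ``$\mu_t(i) = \mu^*(i)$ for any optimal policy $\mu^*$.'' One further point worth making explicit, though it costs nothing: the hypothesis $J_t(i) \to J^*(i)$ holds almost surely rather than surely, so $T$ is a sample-path-dependent (a.s.\ finite) random time; your pathwise argument on the convergence event delivers exactly that, which is what the subsequent decomposition over the events $A_n(x_0)$ requires.
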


Now, using Lemma \ref{lem_policy_conv}, let $T(i)$ be the minimum time after which $\mu_t(i) = \mu^*(i)$ for any optimal policy $\mu^*$. Now, let $A_n(i)$ be the event that $T(i) = n$ for $n \in \N \cup \{\infty\}$. Since $J_t(j)$ converges almost surely for all neighbors of $x_0$, $\P(A_\infty(x_0)) = 0$. We examine the probability that $J_t(x_0)$ does not converge to $J^*(x_0)$. The method is similar to the method in the errata of \cite{tsitsiklis2002convergence}.

\begin{align*}
    \P(J_t(x_0) \nrightarrow J^*(x_0)) &= \P(J_t(x_0) \nrightarrow J^*(x_0), A_\infty(x_0)) + \sum_{n=1}^\infty \P(J_t(x_0) \nrightarrow J^*(x_0) , A_n(x_0))\\
    &= \sum_{n=1}^\infty \P(J_t(x_0) \nrightarrow J^*(x_0) , A_n(x_0))
\end{align*}

We now analyze $\P(J_t(x_0) \nrightarrow J^*(x_0), A_n(x_0))$. For each integer $n \geq 0$, define a sequence $Z_t^{(n)}$ for $t \geq n$ such that $Z_n^{(n)} = J_n(x_0)$ and
\begin{equation}
    Z_{t+1}^{(n)} = (1-q_{\mu_t}(x_0)\gamma_t(x_0))Z_t^{(n)} + q_{\mu_t}(x_0)\gamma_t(x_0)(J^*(x_0) +  w_t(x_0) + v_t(x_0)). \label{eq:standard}
\end{equation}

$Z_t^{(n)}$ is now in a standard form for a stochastic approximation. We will use the following standard theorem adapted from Lemma 1 of \cite{singh2000convergence}  to prove convergence of \eqref{eq:standard} to $J^*(x_0)$:

\begin{lemma}\label{lem_stochastic_approx}
	Let $(x_t)_{t \in \N}, (w_t)_{t \in \N}, $ and $(\eta_t)_{t \in \N}$ be three sequences of scalar random variables such that $x_t$, $\eta_t$, and $w_{t-1}$ are $\scriptF_{t-1}$-measurable. Consider the update
	\begin{equation*}
		x_{t+1} = (1 - \eta_t)x_t + \eta_t w_t.
	\end{equation*}
	Assume the following conditions are met:
	\begin{enumerate}
		\item There exist finite constants $A, B$ such that $\E[w_t^2 | \scriptF_{t-1}] \leq A|x_t|^2 + B$ for all $t$.
		\item $\E[w_t | \scriptF_{t-1}] = 0$ for all $t$.
		\item $\eta_t \in [0, 1]$.
		\item $\sum_{t=0}^\infty \eta_t = \infty$ w.p. 1.
		\item $\sum_{t=0}^\infty \eta_t^2 < \infty$  w.p. 1.
	\end{enumerate}

Then, the sequence $x_t$ converges almost surely to $0$: $x_t \cas 0.$
\end{lemma}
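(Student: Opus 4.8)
The plan is to control the squared error $x_t^2$ and show it is an almost-supermartingale, so that the Robbins--Siegmund almost-supermartingale convergence theorem applies (this is precisely the machinery behind Lemma~1 of \cite{singh2000convergence}). Squaring the update and conditioning on $\scriptF_{t-1}$, with respect to which both $x_t$ and $\eta_t$ are measurable, collapses the cross term by the zero-mean assumption (Condition~2); what survives is a recursion whose $\eta_t^2$-weighted terms are summable by Condition~5 and whose $-2\eta_t x_t^2$ term is the drift that drives $x_t$ to $0$. Throughout I take $A,B \ge 0$ without loss of generality (nonnegativity of $B$ is in fact forced by Condition~1 evaluated where $x_t$ can vanish).

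Concretely, I would first expand
\begin{equation*}
x_{t+1}^2 = (1-\eta_t)^2 x_t^2 + 2(1-\eta_t)\eta_t x_t w_t + \eta_t^2 w_t^2 .
\end{equation*}
Taking $\E[\,\cdot \mid \scriptF_{t-1}]$, using $\E[w_t \mid \scriptF_{t-1}]=0$ to annihilate the middle term, the growth bound $\E[w_t^2\mid \scriptF_{t-1}] \le A x_t^2 + B$ (Condition~1), and the identity $(1-\eta_t)^2 = 1 - 2\eta_t + \eta_t^2$, yields
\begin{equation*}
\E[x_{t+1}^2 \mid \scriptF_{t-1}] \le \bigl(1 + (1+A)\eta_t^2\bigr)\, x_t^2 \;-\; 2\eta_t x_t^2 \;+\; B\eta_t^2 .
\end{equation*}

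Viewing $V_t := x_t^2$ as adapted to the shifted filtration whose $\sigma$-field at index $t$ is $\scriptF_{t-1}$ (so the index offset in the conditioning is absorbed and $V_{t+1}=x_{t+1}^2$ is measurable at index $t+1$, since $x_{t+1}$ is $\scriptF_t$-measurable), the displayed inequality is exactly the Robbins--Siegmund hypothesis with multiplicative perturbation $\beta_t = (1+A)\eta_t^2 \ge 0$, removable drift $\xi_t = 2\eta_t x_t^2 \ge 0$ (nonnegative by Condition~3), and additive term $\zeta_t = B\eta_t^2 \ge 0$. Condition~5 gives $\sum_t \beta_t < \infty$ and $\sum_t \zeta_t < \infty$ almost surely, so the theorem yields both that $x_t^2$ converges almost surely to a finite limit and that $\sum_t \eta_t x_t^2 < \infty$ almost surely. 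To identify the limit as $0$ I argue by contradiction: on any event where $x_t^2 \to c > 0$, the terms $x_t^2$ are eventually at least $c/2$, whence $\sum_t \eta_t x_t^2 \ge (c/2)\sum_t \eta_t = \infty$ by Condition~4, contradicting summability. Hence the limit is $0$ almost surely, i.e. $x_t \cas 0$.

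I expect the genuine content to reside entirely in the Robbins--Siegmund theorem, with the remaining work being careful bookkeeping. Two points deserve attention. First, one must verify $\E[x_t^2] < \infty$ for every finite $t$ by a one-line induction using $\eta_t \le 1$ (giving $\E[x_{t+1}^2] \le (2+A)\E[x_t^2] + B$), so that all conditional expectations are well defined. Second, and this is the step most prone to error, the final deduction that $\sum_t \eta_t x_t^2 < \infty$ together with $\sum_t \eta_t = \infty$ forces the limit of $x_t^2$ to vanish requires intersecting the two almost-sure events (convergence of $x_t^2$ and summability of $\sum_t \eta_t x_t^2$) before drawing the pathwise conclusion; this is exactly where Condition~4 is indispensable.
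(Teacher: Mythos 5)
Your proof is correct. Note that the paper itself gives no proof of this lemma --- it is stated as a standard result adapted from Lemma~1 of the cited reference (Singh et al.) and used as a black box --- so there is no in-paper argument to compare against. Your Robbins--Siegmund derivation is the standard machinery underlying that cited lemma: the expansion of $x_{t+1}^2$, the annihilation of the cross term via Condition~2, the identification of the $(1+(1+A)\eta_t^2)$ multiplicative factor, the conclusion that $\sum_t \eta_t x_t^2 < \infty$ almost surely, and the final contradiction using $\sum_t \eta_t = \infty$ are all sound, and you correctly flag the two places where care is needed (integrability of $x_t^2$ and intersecting the almost-sure events before the pathwise argument). In effect you have supplied the proof the paper delegates to its reference.
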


To use Lemma \ref{lem_stochastic_approx}, we define our $\scriptF_t := \{(w_\tau)_{\tau \leq t}, (v_\tau)_{\tau \leq t}, (X_k^{\mu_\tau})_{\tau \leq t} \}$. It is straightforward to establish the following result, which we state without proof: 

\begin{lemma}\label{lem_noise}
$\E[w_t +v_t | \scriptF_{t-1}] = 0$ and $\E[\|w_t + v_t\|_\infty^2 | \scriptF_{t-1}] \leq D,$ for some constant $D$.
\end{lemma}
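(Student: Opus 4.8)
The plan is to verify the two assertions separately, handling the mean-zero property first and the second-moment bound second. Throughout I use the fact that, conditioned on $\scriptF_{t-1}$, the estimate $J_t$ and hence the greedy policy $\mu_t$, the reaching probabilities $q_{\mu_t}(i)$, and the true values $J^{\mu_t}(i)$ are all determined, whereas the trajectory $X_k^{\mu_t}$ simulated at time $t$ is a fresh run of the chain started from $p$ and is conditionally independent of the past. In particular $\E[\I_{i \in X_k^{\mu_t}} \mid \scriptF_{t-1}] = q_{\mu_t}(i)$ by the definition \eqref{q_mu_def}.

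For the mean-zero claim I would first show $\E[w_t(i)\mid \scriptF_{t-1}] = 0$ componentwise. Writing $w_t(i) = (\tilde{J}^{\mu_t}(i) - J^{\mu_t}(i))\I_{i \in X_k^{\mu_t}}$, I condition additionally on the hitting time $\{N_t(i) = m < \infty\}$. By the strong Markov property the post-$m$ trajectory started at $X_m^{\mu_t} = i$ is distributed as a fresh trajectory from $i$ under $\mu_t$, so the conditional expectation of the discounted tail cost $\tilde{J}^{\mu_t}(i)$ equals $J^{\mu_t}(i)$ regardless of $m$; averaging over $\{N_t(i) < \infty\}$ and using $\tilde{J}^{\mu_t}(i)=0$ off this event yields $\E[w_t(i)\mid\scriptF_{t-1}] = 0$. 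Next I would handle $v_t$ by splitting it along the factor $J^{\mu_t}(i) + w_t(i) - J_t(i)$. The part multiplying the $\scriptF_{t-1}$-measurable quantity $J^{\mu_t}(i) - J_t(i)$ has conditional mean proportional to $\E[\I_{i \in X_k^{\mu_t}} - q_{\mu_t}(i)\mid\scriptF_{t-1}] = 0$. For the remaining part, the key algebraic observation is that since $w_t(i)$ vanishes off $\{i \in X_k^{\mu_t}\}$, we have $(\I_{i \in X_k^{\mu_t}} - q_{\mu_t}(i))\,w_t(i) = (1 - q_{\mu_t}(i))\,w_t(i)$ pointwise, whose conditional mean is $\frac{1-q_{\mu_t}(i)}{q_{\mu_t}(i)}\E[w_t(i)\mid\scriptF_{t-1}] = 0$ by the previous step. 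Summing the pieces gives $\E[v_t(i)\mid\scriptF_{t-1}] = 0$, and therefore $\E[w_t + v_t\mid\scriptF_{t-1}] = 0$.

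For the second-moment bound the main point is uniform boundedness. Since the state and action spaces are finite and costs are nonnegative, $c_{\max} := \max_{i,u} c(i,u)$ is finite, and with $\alpha \in (0,1)$ both $J^{\mu_t}(i)$ and the simulated tail cost $\tilde{J}^{\mu_t}(i)$ lie in $[0, c_{\max}/(1-\alpha)]$; hence $\|w_t\|_\infty \leq c_{\max}/(1-\alpha)$ deterministically. A short induction on \eqref{eq:update0}, using that each update is a convex combination of $J_t(i)$ and $\tilde{J}^{\mu_t}(i)$ whenever $\gamma_t(i)\in[0,1]$ (which we may take to hold for all $t$, since $\beta(t)\to 0$ forces only finitely many violations, and those keep $J_t$ in a bounded range), shows $J_t(i) \in [0, c_{\max}/(1-\alpha)]$ for all $t,i$, so $|J^{\mu_t}(i) + w_t(i) - J_t(i)|$ is bounded. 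Combining this with the uniform lower bound $q_{\mu_t}(i) \geq \delta > 0$ from Assumption \ref{assumption_reachability} and $|\I_{i \in X_k^{\mu_t}} - q_{\mu_t}(i)| \leq 1$ bounds $\|v_t\|_\infty$ by a deterministic constant. The estimate then follows from $\|w_t + v_t\|_\infty^2 \leq 2\|w_t\|_\infty^2 + 2\|v_t\|_\infty^2$, producing a deterministic $D$.

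I expect the main obstacle to be the mean-zero computation for $w_t$: making the strong Markov argument precise requires care with the conditioning on the random hitting time $N_t(i)$ and the event $\{i \in X_k^{\mu_t}\}$, and the cross term in $v_t$ involving $w_t$ must be simplified using the support of $w_t$ before the unbiasedness of $w_t$ can be reused. The boundedness arguments, by contrast, are routine once Assumption \ref{assumption_reachability} and the discounting are in place.
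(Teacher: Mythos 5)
The paper states this lemma without proof (``It is straightforward to establish the following result, which we state without proof''), so there is no argument of the authors' to compare against; your verification is correct and supplies exactly the standard argument they presumably had in mind. The two genuinely delicate points --- reducing the cross term $(\I_{i \in X_k^{\mu_t}} - q_{\mu_t}(i))\,w_t(i)$ to $(1-q_{\mu_t}(i))\,w_t(i)$ via the support of $w_t$ before reusing its unbiasedness, and the uniform deterministic bound on $J_t$ (hence on the factor $J^{\mu_t}(i)+w_t(i)-J_t(i)$ in $v_t$) needed to get a constant $D$ rather than a bound depending on $|J_t|$ --- are both identified and handled correctly.
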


Finally, we need to demonstrate that for our step sizes $\gamma_t(i)=\beta(t)$ and $\gamma_t(i)=\beta(n_t(i))$, the effective step size $q_{\mu_t}(i)\gamma_t(i)$ almost surely satisfies 
\begin{equation}
\sum_{t=0}^\infty q_{\mu_t}(i)\gamma_t(i) = \infty, \qquad \sum_{t=0}^\infty q_{\mu_t}^2(i)\gamma_t^2(i) < \infty.\label{eq:effstep}
\end{equation}
Towards this, we introduce the following:

\begin{lemma}\label{lem_step_size}
For $\gamma_t(i) = \beta(n_t(i))$ and $\gamma_t(i) = \beta(t)$, \eqref{eq:effstep} holds almost surely for each state $i$.
\end{lemma}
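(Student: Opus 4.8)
The plan is to prove \eqref{eq:effstep} separately for the two step-size rules, since they present quite different challenges. For the deterministic rule $\gamma_t(i) = \beta(t)$, the effective step size is $q_{\mu_t}(i)\beta(t)$. Using Assumption \ref{assumption_reachability} and the finiteness of the policy set, we have the uniform bound $\delta \leq q_{\mu_t}(i) \leq 1$. Hence $\sum_t q_{\mu_t}(i)\beta(t) \geq \delta \sum_t \beta(t) = \infty$ and $\sum_t q_{\mu_t}^2(i)\beta^2(t) \leq \sum_t \beta^2(t) < \infty$, so both conditions follow immediately and deterministically from the assumptions on $\beta$. No probabilistic argument is needed here.

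The state-dependent rule $\gamma_t(i) = \beta(n_t(i))$ is the substantive case. Here the difficulty is that $n_t(i)$ — the number of times state $i$ has been reached before time $t$ — is itself random, so we cannot directly read off the sums. First I would treat the square-summable condition, which is the easier of the two. Since $\beta$ is eventually nonincreasing and $n_t(i)$ increases by at most one per step, the distinct values $\beta(n_t(i))$ contributing to the sum are dominated by $\sum_{k=0}^\infty \beta^2(k) < \infty$; each value of $n_t(i)$ is held for a number of steps equal to the gap between consecutive visits, but between consecutive visits the step size does not change and the sum $\sum_t q_{\mu_t}^2(i)\gamma_t^2(i)$ telescopes into $\sum_k q^2 \beta^2(k)$ over visit times, bounded above by $\sum_k \beta^2(k)$, giving finiteness almost surely.

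The harder direction is showing $\sum_t q_{\mu_t}(i)\beta(n_t(i)) = \infty$ almost surely. The obstacle is twofold: we must rule out the event that $i$ is visited only finitely often (in which case $n_t(i)$ freezes and the divergence could fail), and we must control the random growth rate of $n_t(i)$. The plan is to show that $n_t(i) \to \infty$ almost surely, and moreover grows no faster than linearly, so that $\beta(n_t(i))$ does not decay too quickly. Because $q_{\mu_t}(i) \geq \delta > 0$ uniformly, state $i$ is reached in each iteration with probability at least $\delta$ independently of the past given $\scriptF_{t-1}$; a Borel--Cantelli or second-Borel--Cantelli argument on the conditional probabilities shows $i$ is visited infinitely often, so $n_t(i) \to \infty$ almost surely. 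For the divergence of the sum, I would set up a martingale argument: writing $M_t = \sum_{\tau < t}(\I_{i \in X_k^{\mu_\tau}} - q_{\mu_\tau}(i))\beta(n_\tau(i))$, this is a martingale with respect to $\scriptF_t$, and one shows via its bounded conditional variance (again using $\sum_k \beta^2(k) < \infty$) that $M_t$ converges almost surely. Then
\[
\sum_{\tau < t} q_{\mu_\tau}(i)\beta(n_\tau(i)) = \sum_{\tau < t}\I_{i \in X_k^{\mu_\tau}}\beta(n_\tau(i)) - M_t = \sum_{k=0}^{n_t(i)-1}\beta(k) - M_t.
\]
Since $n_t(i) \to \infty$ and $\sum_k \beta(k) = \infty$ while $M_t$ converges to a finite limit, the left side diverges almost surely, which is exactly the first condition in \eqref{eq:effstep}.

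I expect the main obstacle to be the rigorous martingale bookkeeping in the state-dependent case: verifying that $M_t$ is genuinely a martingale adapted to the filtration $\scriptF_t$ (the indicators $\I_{i \in X_k^{\mu_\tau}}$ have conditional mean exactly $q_{\mu_\tau}(i)$ by the definition \eqref{q_mu_def}), and confirming that its conditional increments have summable variance so that the martingale convergence theorem applies. The indexing subtlety — that $n_\tau(i)$ counts visits strictly before $\tau$ and is therefore $\scriptF_{\tau-1}$-measurable, making $\beta(n_\tau(i))$ a predictable multiplier — is the step most prone to error and must be handled carefully for the telescoping identity above to be exact.
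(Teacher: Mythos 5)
Your treatment of the deterministic rule $\gamma_t(i)=\beta(t)$ matches the paper's and is fine. For the state-dependent rule, however, you have the difficulty exactly backwards, and the part you dismiss as easy contains a genuine gap. Your claim that $\sum_t q_{\mu_t}^2(i)\beta^2(n_t(i))$ ``telescopes into $\sum_k q^2\beta^2(k)$ over visit times'' is false: the sum in \eqref{eq:effstep} runs over \emph{all} time steps, and since $n_t(i)$ is frozen between consecutive visits, each value $\beta^2(k)$ is counted once per time step in the $k$-th inter-visit gap. The sum is therefore of the form $\sum_k (\text{length of gap }k)\cdot\beta^2(k)$, and the gap lengths are unbounded random variables; nothing telescopes. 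To conclude finiteness one must control how fast $n_t(i)$ grows relative to $t$. This is precisely what the paper's argument supplies: it builds the martingale $R_t=\sum_{\tau\le t}(\I_{i\in X_k^{\mu_\tau}}-q_{\mu_\tau}(i))/\tau$, invokes martingale convergence and Kronecker's lemma to get $n_t(i)/t\to \lim \frac{1}{t}\sum_{\tau\le t}q_{\mu_\tau}(i)\ge\delta$, hence $n_t(i)\ge\lfloor t\delta/2\rfloor$ for large $t$, and then bounds $\sum_t\beta^2(n_t(i))\le(2/\delta+1)\sum_t\beta^2(t)<\infty$ using the eventual monotonicity of $\beta$. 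Some argument of this law-of-large-numbers type (or an expected-gap-length bound) is unavoidable; your proposal omits it entirely.

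Your divergence argument, by contrast, is vastly more elaborate than necessary and partially circular. The paper's proof is one line: $n_t(i)\le t$, so for $t$ large enough that $\beta$ is nonincreasing, $\beta(n_t(i))\ge\beta(t)$, and $\sum_t\beta(t)=\infty$ finishes it — no Borel--Cantelli, no martingale, no need even to show $i$ is visited infinitely often. Your martingale $M_t=\sum_{\tau<t}(\I_{i\in X_k^{\mu_\tau}}-q_{\mu_\tau}(i))\beta(n_\tau(i))$ and the identity $\sum_{\tau<t}\I_{i\in X_k^{\mu_\tau}}\beta(n_\tau(i))=\sum_{k=0}^{n_t(i)-1}\beta(k)$ are correct as far as they go, but establishing a.s.\ convergence of $M_t$ requires its predictable quadratic variation $\sum_\tau q_{\mu_\tau}(i)(1-q_{\mu_\tau}(i))\beta^2(n_\tau(i))$ to be finite — essentially the square-summability statement you have not yet proved. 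If you repair the square-summability step along the paper's lines, I would recommend replacing your divergence argument with the paper's elementary one.
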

\begin{proof} Since $0 < \delta < q_{\mu_t}(i) \leq 1$, it is sufficient to show that $\sum_{t=0}^\infty \gamma_t(i) = \infty$ and $\sum_{t=0}^\infty \gamma^2_t(i) < \infty$ for all $i \in S$ almost surely. This is true by definition for $\gamma_t(i) = \beta(t)$, so it remains to show this for $\gamma_t(i) =\beta(n_t(i))$. 

First we show that $\sum_{t=0}^\infty \beta(n_t(i))=\infty$ almost surely. Observe that $n_t(i) \leq t $ for all $t$ since $n_t(i)$ represents the number of trajectories in the first $t$ trajectories where state $i$ was visited. For sufficiently large $t$, $\beta(t)$ is nonincreasing, so $\beta(n_t(i)) \geq \beta(t)$. Furthermore, since $\sum_{t=0}^\infty \beta(t) =\infty,$ we have that $\sum_{t=0}^\infty \beta(n_t(i)) = \infty.$

We will apply the martingale convergence theorem to show that $\sum_{t=0}^\infty \gamma^2(t) < \infty$ almost surely. Define sequences $Q_t$ and $R_t$ as follows:
\begin{equation*}
    Q_t = \frac{\I_{i \in X_k^{\mu_t}} - q_{\mu_t}(i)}{t} \qquad\qquad R_t = \sum_{\tau = 1}^t Q_\tau
\end{equation*}
Clearly, $\E[Q_t | \scriptF_{t-1}] = 0$ and $|Q_t| \leq 1/t.$ Next, consider $\E[Q_tQ_u]$ for $t > u.$ We note that
$$\E[Q_tQ_u|\scriptF_{t-1}]=Q_u\E[Q_t|\scriptF_{t-1}]=0.$$ Thus, $\E[Q_t Q_u]=0$ and similarly by considering $u>t,$ $\E[Q_tQ_u]=0 \,\forall t\neq u.$ Therefore,
$$\sup_t \E[R^2_t] =\sup_t \sum_{\tau=1}^t \E[Q_\tau^2]\leq \sup_t \sum_{\tau=1}^t\frac{1}{t^2}\leq \frac{\pi^2}{6} < \infty.$$ Thus, $R_t$ is a martingale and satisfies the conditions of the martingale convergence theorem, and therefore $R_t$ converges almost surely to some well-defined random variable $R_\infty$, i.e., $P(R_\infty<\infty)=1.$ Since 
\begin{equation*}
\lim_{t \to \infty} \sum_{\tau=1}^t \frac{\I_{i \in X_k^{\mu_\tau}} - q_{\mu_\tau}(i)}{\tau}
\end{equation*}
is finite almost surely, by Kronecker's lemma, we have
\begin{align*}
    \lim_{t\to\infty} \frac{1}{t} \sum_{\tau=1}^t (\I_{i \in X_k^{\mu_\tau}} - q_{\mu_\tau}(i)) &= 0\\
    \lim_{t\to\infty} \left(\frac{n_t(i)}{t} - \frac{\sum_{\tau=1}^t q_{\mu_\tau}(i)}{t} \right) &= 0
\end{align*}
almost surely. Since $q_{\mu_t}(i) \geq \delta > 0$ for all $t\geq 0$ and $i \in S$, we almost surely have
\begin{align*}
    \limsup_{t\to\infty} \frac{t}{n_t(i)} \leq \frac{1}{\delta}.
\end{align*}
This implies that for sufficiently large $t$, $\lfloor t\delta/2 \rfloor \leq n_t(i)$. We have assumed that, for sufficiently large $t$, $\beta$ is nonincreasing, so $\beta(n_t(i))\leq \beta(\lfloor t\delta/2 \rfloor),$ which implies $\beta^2(n_t(i))\leq \beta^2(\lfloor t\delta/2 \rfloor).$ Finally, using $\sum_{t=0}^\infty \beta^2(t) < \infty,$ there is almost surely some $T_1 < \infty$ (which may depend on the sample path), such that
\begin{align*}
\sum_{t=T_1}^\infty \beta^2(n_t(i)) \leq \sum_{t=T_1}^\infty \beta^2(\lfloor \frac{t\delta}{2} \rfloor) \leq \sum_{t=T_1}^\infty (\frac{2}{\delta}+1) \beta^2(t) < \infty. 
\end{align*}
The second inequality in the previous line follows from the fact that the value of $\lfloor t\delta/2\rfloor$ changes only at $t=0,\lceil 2/\delta \rceil, \lceil 4/\delta \rceil, \ldots$. This implies that $\sum_{t=0}^\infty \beta^2(n_t(i)) < \infty$ almost surely.
\end{proof} 
Thus, the recurrence in \eqref{eq:standard} takes the form required by Lemma \ref{lem_stochastic_approx}, with step size $q_{\mu_t}(x_0)\gamma_t(x_0)$ and noise term $w_t(x_0) + v_t(x_0)$. Conditions 1 and 2 in Lemma \ref{lem_stochastic_approx} are satisfied by Lemma \ref{lem_noise}. Condition 3 is clearly satisfied, because $\gamma_t(x_0) \in [0, 1]$. Conditions 4 and 5 are satisfied due to Lemma \ref{lem_step_size}. Therefore, by Lemma \ref{lem_stochastic_approx}, $Z_t^{(n)} \cas J^*(x_0)$ for all positive integers $n$. Now, we are ready to complete the proof. Conditioned on $A_n(x_0)$, we have $J_t(x_0) = Z_t^{(n)}(x_0)$ for all $t \geq n$. Therefore:
\begin{align*}
    \P(J_t(x_0) \nrightarrow J^*(x_0)) &= \sum_{n=1}^\infty \P(J_t(x_0) \nrightarrow J^*(x_0), A_n(x_0))\\
    &= \sum_{n=1}^\infty \P(Z_t^{(n)} \nrightarrow J^*(x_0), A_n(x_0))\\
    &\leq \sum_{n=1}^\infty \P(Z_t^{(n)} \nrightarrow J^*(x_0))\\
    &= 0 && \text{(Lemma \ref{lem_stochastic_approx})}
\end{align*}
This completes the proof that $J_t(x_0) \cas J^*(x_0)$. We then only need to complete the induction. For any $0 < i \leq L$, suppose that $J_t(x_j) \cas J^*(x_j)$ for all $j < i$. We define $Z_t^{(n)}$ analogously to above, so $Z_n^{(n)} = J_n(x_i)$ and:
\begin{equation*}
Z_{t+1}^{(n)} = (1-q_{\mu_t}(x_i)\gamma_t(x_i))Z_t^{(n)} + q_{\mu_t}(x_i)\gamma_t(x_i)(J^*(x_i) +  w_t(x_i) + v_t(x_i))
\end{equation*}
By the inductive assumption and because of convergence for every recurrent class, the $J_t(j)$ for all $j \in N(x_i)$ converge almost surely. If we define $T(x_i)$ in the same way as with $x_0$, then with probability 1, $T(x_i)$ is finite. By the same reasoning as the base case, then $J_t(i) \cas J^*(i).$

\section{Numerical Experiments}
\begin{figure}[ht]
    \centering
   \subfloat[]{
       \includegraphics[width=.5\textwidth]{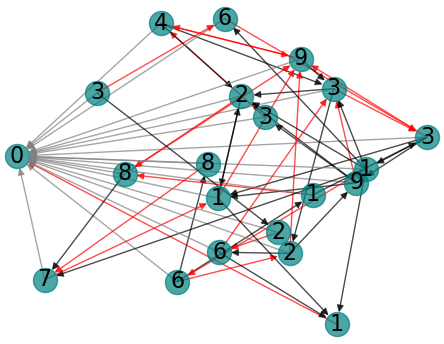}\label{fig:sub1a}
   }
   \subfloat[]{
        \includegraphics[width=.45\textwidth]{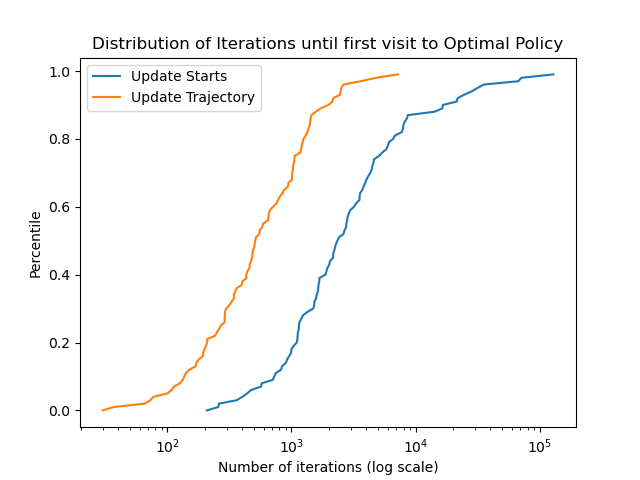}\label{fig:sub1b}
   }
   \label{fig:sim1}
   \caption{The MDP graph and results of our first experiment to compare the convergence speed of the asynchronous version of the algorithm in \cite{tsitsiklis2002convergence} and our variant presented in \ref{eq:updatewithv}, which updates every state along the sampled trajectory.}
\end{figure}

The primary difference between the algorithm we have analyzed and the variant previously analyzed in \cite{tsitsiklis2002convergence} is the update step. In \cite{tsitsiklis2002convergence}, only the value of a single, randomly-selected state is updated at each time step. However, we update every state visited by the trajectory sampled each time step. Because we update each visited state, we expect the variant we have analyzed to converge more quickly. In order to support this claim, we have performed two experiments which demonstrate faster convergence. Note that in the present section, we use \textit{rewards} instead of \textit{costs} where we seek to \textit{maximize} instead of \textit{minimize} cumulative discounted rewards with our policies. All of our results still hold when we use \textit{maximums} instead of \textit{minimums} to determine the policy that \textit{maximizes} the expected cumulative discounted reward.

In the first experiment, we have a Markov chain with a single absorbing state shown in Figure \subref{fig:sub1a}, where the absorbing state has label 0. All edges $(i,j)$ in the figure represent a possible transition from node $i$ to $j$. At each state $i \neq 0$, there is an action $j$ associated with edge $(i,j)$ out of state $i$, such that taking action $j$ transitions to state $j$ with probability $0.6$ and transitions to a different random neighbor of node $i$ chosen uniformly at random with probability $0.4$. If there is only edge out of state $i$, then the only action deterministically transitions along that edge. For all nonzero states in Figure \subref{fig:sub1a}, the label of the state corresponds to the reward of taking any action in that state (equivalently, the cost is the negation of the reward). The red arrows correspond to the optimal action in each state. This example is similar to taking $\epsilon$-greedy actions in an MDP with deterministic state transitions.

We implement both our algorithm given in (7) and the variant studied in \cite{tsitsiklis2002convergence} which only updates a single state each iteration, and compare the number of iterations required for convergence. The results over 100 trials, assuming a discount factor of $\alpha = 0.9$ and a step size of $1/n_t(i)$, can be found in Figure \subref{fig:sub1b}. The distribution of the starting state for each iteration was assumed to be uniformly random for both algorithms. Each algorithm was run until the first time $t$ that $\mu_t = \mu^*$, and we graphed the empirical distributions of the number of iterations required. On average, our algorithm (updating along the entire trajectory) required only about 854 iterations, compared to the algorithm from \cite{tsitsiklis2002convergence}, which required 7172 iterations on average when updating only the starting state of the trajectory each time step.

\begin{figure}[ht]
    \centering
    \subfloat[]{
        \includegraphics[width=.85\textwidth]{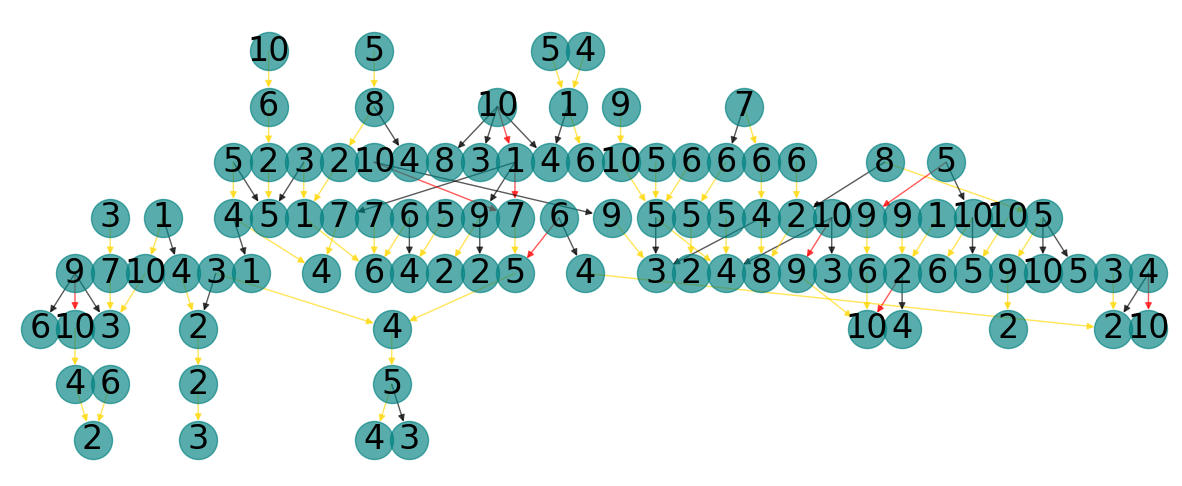}\label{fig:sub2a}
    }\\
    \subfloat[]{
        \includegraphics[width=.45\textwidth]{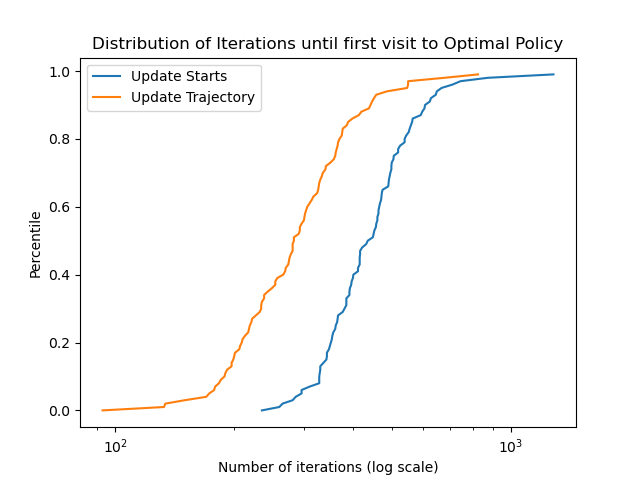}\label{fig:sub2b}
    }
    \label{fig:sim2}
   \caption{The MDP graph and results of our second experiment comparing the asynchronous algorithm from \cite{tsitsiklis2002convergence} with our variant}
\end{figure}

In the second example, we consider a different stochastic shortest path problem on the acyclic graph, shown in Figure \subref{fig:sub2a}. In this example, there are two actions, $j_1$ and $j_2$, associated with each edge $(i,j)$. If action $j_1$ is taken, then the reward in the label for node $i$ is accrued and a transition occurs as in the previous example, where the edge $(i,j)$ is taken with probability 0.6 and a different uniformly random edge is taken with probability $0.4$. The action $j_2$ allows for a more certain reward, at a cost; the probability of taking the chosen edge is increased to 0.8, but the reward is decreased by 1.

Again, we compare our algorithm to the variant studied in \cite{tsitsiklis2002convergence} for this problem. The optimal policy is given by the red and yellow arrows in Figure \subref{fig:sub2a}, where yellow arrows are associated with $j_1$ and red arrows with $j_2$. The distribution of iterations required for convergence can be found in Figure \subref{fig:sub2b}. Again, updating the entire trajectory (300 iterations on average) is more efficient than updating a single state (455 iterations on average).

\section{Extensions}

Thus far, we have presented a proof of convergence for a certain class of discounted MDPs with deterministic costs. However, the same ideas we have used can be easily extended to a number of related settings. In this section, we will discuss extensions to stochastic shortest path and game theoretic versions of the problem. We will also extend the results to a setting where we
assume knowledge of clusters of states with the same value function.

\subsection{Stochastic Shortest Path Problem} \label{SSP}
In a stochastic shortest path (SSP) problem, the goal is to minimize the cumulative cost over all policies. It is the undiscounted MDP problem, where the discount factor $\alpha$ is set to 1 and the cost-to-go $J^\mu(i)$ becomes
\begin{equation*}
J^{\mu}(i) = \mathbb{E}\left[\sum_{k=0}^\infty c(X_k^\mu, \mu(X_k^\mu)) | X_0^\mu = i\right].
\end{equation*}

To account for the lack of a discount factor, we will need to adjust our assumptions accordingly. We again assume that the state and action spaces are finite and we assume that Assumptions \ref{assumption_reachability} and \ref{assumption_common_transitions} hold as in the discounted case. However, instead of allowing the cost to infinitely accumulate in one of several recurrent classes, we require a different structural assumption, which combines all recurrent classes into one absorbing state and guarantees that the cost remains finite under every policy:
\begin{assumption}\label{assumption_ssp}
	There is a unique absorbing state 0, which incurs a cost of 0 under every action. For notational convenience, we will denote the state space for the SSP as $S \cup \{0\}$, with $S = \{1, \ldots, n\}$ as before. We assume the subgraph of the reachability graph induced by $S$ is acyclic.
\end{assumption}

We define our algorithm identically to the discounted case, but with $\alpha = 1$. 
The update proceeds using \eqref{eq:update0}. This procedure can be shown to converge, similarly to the discounted case:

\begin{theorem}
	Given Assumptions \ref{assumption_reachability}, \ref{assumption_common_transitions}, and \ref{assumption_ssp}, if $J_t$ is updated as in \eqref{eq:update0} and $\gamma_t(i) = \beta(n_t(i))$ or $\gamma_t(i) = \beta(t)$, then $J_t$ converges almost surely to $J^*$.
\end{theorem}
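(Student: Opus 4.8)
The plan is to follow the same two-stage strategy used for Theorem~\ref{main_theorem} in the discounted case, exploiting the fact that Assumption~\ref{assumption_ssp} collapses the ``recurrent'' part of the analysis to the single absorbing state $0$, while the acyclic structure on $S$ lets us reuse the reverse-topological-sort induction almost verbatim. The one genuine conceptual difference is that with $\alpha = 1$ we can no longer lean on the contraction of $T_\mu$ to guarantee finiteness of value functions and noise terms; that role is instead played by acyclicity.

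First I would record the consequences of Assumption~\ref{assumption_ssp}. Because the subgraph of $G$ induced by $S$ is acyclic and $S$ is finite, every trajectory can take at most $\ell$ steps inside $S$, where $\ell$ is the length of the longest directed path in $G(S)$; after at most $\ell$ steps it must enter the absorbing state $0$. Hence every stationary policy is proper: each trajectory terminates in a bounded number of steps, so $J^\mu(i)$ is a finite sum of finitely many bounded costs and is therefore well-defined and finite for every $\mu$ and $i$. By the standard theory of stochastic shortest path problems in which all policies are proper, $J^*$ is well-defined and is the unique solution of the Bellman optimality equation~\eqref{Bell} with $\alpha=1$. For the absorbing state, $J^*(0)=0$, and since the cost at $0$ and every tail cost from $0$ vanish, the update~\eqref{eq:update0} leaves $J_t(0)=0$ for all $t$; thus $J_t(0)\cas J^*(0)$ trivially, which replaces the entire recurrent-class subsection.

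Next I would check that Lemmas~\ref{lem_policy_conv}--\ref{lem_step_size} transfer to the undiscounted setting. Lemma~\ref{lem_stochastic_approx} is a generic stochastic-approximation statement and is insensitive to $\alpha$. Lemma~\ref{lem_step_size} depends only on $q_{\mu_t}(i)\ge\delta>0$ and the properties of $\beta$, neither of which involves the discount factor, so it holds unchanged. Lemma~\ref{lem_policy_conv} carries over because, with $\alpha=1$, the greedy action at a state is selected by comparing $c(i,u)+\sum_j P_{ij}(u)J_t(j)$ across the finite action set; as the neighbor values converge, these quantities converge to the optimal $Q$-values, and finiteness yields a strict gap after a finite time. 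The only place that genuinely uses boundedness is Lemma~\ref{lem_noise}: I would re-establish $\E[w_t+v_t\mid\scriptF_{t-1}]=0$ and the uniform conditional second-moment bound using the fact that each $\tilde J^{\mu_t}(i)$ is now a sum of at most $\ell$ bounded costs, hence itself bounded, so that $w_t$ and $v_t$ are bounded and the required moment estimates follow directly.

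With these pieces in place, the induction is identical to Section~\ref{Convergence for transient states}. I would take a reverse topological sort $(x_0,\dots,x_L)$ of $S$; the first node $x_0$ has all of its successors equal to the absorbing state $0$ (since no lower-indexed transient state exists), whose value has already converged, so the base case follows by decomposing over the events $A_n(x_0)$ that the greedy policy at $x_0$ first matches an optimal policy at time $n$, comparing $J_t(x_0)$ to the auxiliary recursion~\eqref{eq:standard}, and invoking Lemmas~\ref{lem_policy_conv}--\ref{lem_step_size}. The inductive step is the same, with $0$ replaced by the already-converged lower-indexed states. The main obstacle is entirely front-loaded: it is establishing, from acyclicity alone, that every policy is proper so that $J^\mu$, $J^*$, and the noise terms remain finite without the cushion of $\alpha<1$. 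Once bounded trajectory lengths are secured, every remaining step of the discounted argument applies without modification.
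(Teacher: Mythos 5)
Your proposal is correct and follows essentially the same route as the paper: the absorbing state replaces the recurrent-class analysis, the lemmas are checked to transfer to $\alpha=1$, and the reverse-topological-sort induction from the transient-state section is repeated verbatim. Your explicit observation that acyclicity of $G(S)$ bounds every trajectory's length and hence makes all policies proper (supplying the finiteness that the discount factor provided before) is exactly the point the paper leaves implicit when it asserts that Lemmas~\ref{lem_noise} and~\ref{lem_step_size} ``continue to hold.''
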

\begin{proof}
	The proof for this result follows the proof given in section 6.2, of the convergence for transient states in the discounted case. Due to our assumptions, the nonzero states of the SSP form an acyclic graph, so they admit a reverse topological sort $(x_1, x_2, \ldots, x_n)$, where in the reachability graph $G = (S, E)$, $(x_i, x_j) \in E$ implies $i > j$. Thus, state $x_1$ can only transition to the absorbing state 0, and for all time $t$, we have $J^{\mu_t}(x_1) = J^*(x_1)$. It is straightforward to show that Lemmas \ref{lem_noise} and \ref{lem_step_size} continue to hold for the SSP problem. Therefore, by a simple stochastic approximation argument, $J_t(x_1) \cas J^*$.
	
	The proof proceeds by induction in the same manner as in the undiscounted case. For any $k > 1$, assuming $J_t(x_i) \cas J^*(x_i)$ for all $i < k$, we examine $J_t(x_k)$. It is straightforward to show that Lemma \ref{lem_policy_conv} holds for the SSP problem. By an argument analogous to the one used above for $x_1$, then $J_t(x_k) \cas J^*(x_k)$.
\end{proof}

\subsection{Alternating Zero-Sum Game} 
We consider a finite-state stochastic shortest path game with two players: player 1 and player 2.  Player 1 seeks to minimize the cumulative cost, while player 2 works to maximize the cost. In general, player 1 and 2 can take simultaneous actions $u \in \scriptA_1(i)$ and $v \in \scriptA_2(i)$, respectively, in state $i$. Accordingly, transitions $P_{ij}(u,v)$ and costs $c(i,u,v)$ depend on both actions. These action spaces are often not finite, for example, to allow for mixed strategies for each player. Given a policy $\mu$ for player 1 and $\nu$ for player 2, we can define a cost function $J^{\mu, \nu}$:

\begin{equation*}
    J^{\mu, \nu}(i) = \E\left[\sum_{k=0}^\infty c(X_k, u_k, v_k) | X_0 = i, u_k = \mu(X_k), v_k = \nu(X_k)\right]
\end{equation*}

The goal in solving stochastic shortest path games is to find a Nash equilibrium solution $J^*$, such that
\begin{align*}
    \inf_{\mu} \sup_{\nu} J^{\mu, \nu}(i) = \sup_{\nu} \inf_{\mu} J^{\mu, \nu}(i) = J^*(i). 
\end{align*}

When the value of a game exists, it can be found as the solution to the minimax Bellman equation $TJ^* = J^*$, where $T$ is the minimax Bellman operator defined by
\begin{equation*}
    (TJ)(i) = \inf_u \sup_v \left[c(i, u, v) + \sum_{j}P_{ij}(u,v)J(j)\right]
\end{equation*}

If such a solution exists, then $J^*$ is the optimal value function for the game. One category of games where an equilibrium always exists is \emph{alternating games}, which we consider in this section (for more details, see section 2.3.3 of \cite{patekthesis}). In an alternating (also known as sequential) game, players take ``turns'' performing actions. The state space, outside of a single absorbing terminating state $0$, can be partitioned into two sets of states $S_1$ and $S_2$, where $S_1$ is the set of states where player 1 takes actions and $S_2$ is the set of states where player 2 acts. For states $i \in S_1$, the choice of action for player 2 is trivial and therefore $|\scriptA_2(i)| = 1$. Similarly, for states $i \in S_2$, $|\scriptA_1(i)| = 1$. Without loss of generality, we can combine states to assume $P_{ij}(u,v) = 0$ if $i$ and $j$ are either both in $S_1$ or both in $S_2$, so no player ever takes two turns in a row.

For the purposes of this section, we assume that the action spaces in each state are finite. In an alternating game, there is no need for mixed strategies, as at each step, the one-step minimax problem reduces to a simple minimum or maximum, depending on the current turn. Thus, we can combine the action pair $(u,v)$ into a single action and simplify the Bellman operator to a state-dependent min or max:
\begin{equation}
    (TJ)(i) = \begin{cases}
                   \min_u [c(i,u) + \sum_j P_{ij}(u)J(j)] & i \in S_1\\
                   \max_u [c(i,u) + \sum_j P_{ij}(u)J(j)] & i \in S_2.
    \end{cases}\label{eq:gameT}
\end{equation}
The following still holds:
$$TJ^*=J^*$$ for the operator $T$ in \eqref{eq:gameT}. Thus, we have the following:
\begin{equation}
    J^*(i) = \begin{cases}
                   \min_u [c(i,u) + \sum_j P_{ij}(u)J^*(j)] & i \in S_1\\
                   -\min_u [-c(i,u) - \sum_j P_{ij}(u)J^*(j)] & i \in S_2.\label{eq:negaJstar}
    \end{cases}
\end{equation} We define the following:
\begin{equation*}
    c'(i,u) = \begin{cases}
                   c(i,u) & i \in S_1\\
                   -c(i,u) & i \in S_2
    \end{cases}
\end{equation*}
and
\begin{equation*}
    J'(i) = \begin{cases}
                   J^*(i) & i \in S_1\\
                   -J^*(i) & i \in S_2.
    \end{cases}
\end{equation*}
Substituting $c'(i,u)$ and $J'(i)$ in equation \eqref{eq:negaJstar}, we arrive at the well-known \textit{negamin formulation} of the problem:
\begin{equation*}
    J'(i) = \min_u [c'(i,u) - \sum_j P_{ij}(u)J'(j)].
\end{equation*}
We denote the corresponding negamin Bellman operator as $T'$:
\begin{equation*}
    (T'J)(i) = \min_{u} \left[ c'(i,u) - \sum_j P_{ij}(u) J(j) \right]
\end{equation*}
 The negamin formulation transforms the problem from alternating minima and maxima into a problem with only minima. It is also often used in practical search algorithms for games.\footnote{Many existing formulations use the related negamax formulation instead, which transforms the problem into one only requiring a maximum instead of a minimum. We use the negamin formulation to be more consistent with our reinforcement learning formulation in terms of costs. For an example of reinforcement algorithms using a negamax formulation, see Alpha Go \cite{silver2016mastering}.} Intuitively, the negamin formulation redefines the cost $c'(i,u)$ as the ``cost from the perspective of the current player'', where the cost for one player is the negative of the cost for their opponent. Similarly, it defines a notion of value $J'$ as the value of the game from the perspective of the current player, using these new costs.
 
This negamin Bellman equation is equivalent to the SSP Bellman equation from the previous section, but with a ``discount factor'' of $-1$. Thus, to find the original negamin value $J^*$ of the game, satisfying $J^*=TJ^*$, we instead follow the algorithm \eqref{eq:update0} to find $J'$ but with $J^{\mu_t}+w_t$ defined in terms of the negamin Bellman operator $T'$, with $\alpha = -1$. Then, the value $J^*$ from the original formulation can be recovered from $J'$, the optimal negamin solution. Under the same assumptions as the previous section, we then converge almost surely to the optimal value of the game:

\begin{theorem}
	If $X$ is a stochastic shortest path game satisfying Assumptions \ref{assumption_reachability}, \ref{assumption_common_transitions}, and \ref{assumption_ssp}, if $J_t$ is updated as in \eqref{eq:update0}, and $\gamma_t(i) = \beta(n_t(i))$ or $\gamma_t(i) = \beta(t)$, where $\sum_{t=0}^\infty \beta(t) = \infty,$ and $\sum_{t=0}^\infty \beta^2(t) < \infty$, then $J_t$ converges almost surely to $J'$, from which we deduce $J^*$, the value function corresponding to the stochastic shortest path game.
\end{theorem}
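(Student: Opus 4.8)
\section*{Proof Proposal}

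The plan is to reduce the game to its negamin formulation and then recycle, almost verbatim, the acyclic stochastic-shortest-path argument of Section \ref{SSP}. By \eqref{eq:negaJstar} together with the definitions of $c'$ and $J'$, computing the game value $J^*$ is equivalent to computing the fixed point $J'$ of the negamin operator $T'$, and the algorithm is run so that $J^{\mu_t}+w_t$ is interpreted through $T'$ with effective discount factor $\alpha=-1$. Because Assumption \ref{assumption_ssp} renders the nonzero states acyclic with a single absorbing state, the negamin problem has exactly the graph structure of the SSP problem; the only change is the sign of the one-step propagation term. I would therefore argue that the transient-state convergence proof is insensitive to replacing $\alpha=1$ by $\alpha=-1$, provided three facts persist: almost-sure termination of every trajectory in finitely many steps, unbiasedness and bounded conditional variance of the Monte Carlo tail estimates, and eventual stabilization of the negamin-greedy policy.

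First I would fix a reverse topological sort $(x_1,\dots,x_n)$ of the nonzero states, so that $(x_i,x_j)\in E$ implies $i>j$, exactly as in Section \ref{SSP}. Acyclicity forces each trajectory to move strictly forward in this order until it is absorbed at $0$, so every trajectory terminates in at most $n$ steps; consequently the negamin tail estimate
\[
\sum_{k=N_t(i)}^{\infty}(-1)^{\,k-N_t(i)}\,c'\!\left(X_k^{\mu_t},\mu_t(X_k^{\mu_t})\right)
\]
is a finite, uniformly bounded sum. Its conditional expectation equals $(J')^{\mu_t}(i)$, the value of $\mu_t$ under the primed evaluation recursion $(J')^{\mu_t}(i)=c'(i,\mu_t(i))-\sum_j P_{ij}(\mu_t(i))\,(J')^{\mu_t}(j)$, so the estimate is unbiased with bounded second moment; this is exactly what makes Lemma \ref{lem_noise} carry over once $w_t+v_t$ is defined through $T'$. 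For the base case, $x_1$ transitions only to the absorbing state, so $(J')^{\mu_t}(x_1)=J'(x_1)$ deterministically for all $t$, and the surrogate recursion \eqref{eq:standard} (now with $\alpha=-1$) reduces to a standard stochastic approximation whose effective step size $q_{\mu_t}(x_1)\gamma_t(x_1)$ satisfies the hypotheses of Lemma \ref{lem_stochastic_approx} by Lemma \ref{lem_step_size}; hence $J_t(x_1)\cas J'(x_1)$.

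For the inductive step I would assume $J_t(x_j)\cas J'(x_j)$ for all $j<k$. Since every neighbor of $x_k$ precedes it in the topological order, all neighboring estimates converge almost surely, and Lemma \ref{lem_policy_conv} --- whose proof depends only on the argmin over a finite action set stabilizing once the neighboring values have converged, and which therefore applies unchanged to the negamin min-operator --- supplies a finite time $T(x_k)$ after which the negamin-greedy action at $x_k$ is optimal. Conditioning on $\{T(x_k)=n\}$ and comparing $J_t(x_k)$ with the surrogate $Z_t^{(n)}$ built as in \eqref{eq:standard}, the decomposition of $\P(J_t(x_k)\nrightarrow J'(x_k))$ used for $x_0$ in Section \ref{Convergence for transient states} shows this probability vanishes, giving $J_t(x_k)\cas J'(x_k)$ and closing the induction on all nonzero states.

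The step I expect to be the crux is confirming that Lemma \ref{lem_stochastic_approx} genuinely applies after setting $\alpha=-1$: one must check that the alternating signs in the tail sum change neither the fixed point nor the zero-mean, bounded-variance structure of the noise, while the effective step size $q_{\mu_t}(i)\gamma_t(i)\in[0,1]$ is manifestly unaffected by the discount factor, so Conditions 1--5 of Lemma \ref{lem_stochastic_approx} continue to hold via Lemmas \ref{lem_noise} and \ref{lem_step_size}. Once $J_t\cas J'$ is established on all nonzero states, the optimal game value is recovered componentwise by inverting the transformation, namely $J^*(i)=J'(i)$ for $i\in S_1$ and $J^*(i)=-J'(i)$ for $i\in S_2$, which completes the argument.
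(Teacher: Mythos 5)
Your proposal is correct and follows essentially the same route as the paper, which simply asserts that the proof is identical to that of the stochastic shortest path problem once Lemmas \ref{lem_policy_conv} and \ref{lem_stochastic_approx} are verified for the negamin formulation with $\alpha=-1$. You supply more detail than the paper does (explicitly checking termination, unbiasedness, the base case at $x_1$, and the sign-flip recovery of $J^*$ from $J'$), but the underlying argument is the same reduction to the Section \ref{SSP} induction.
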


The proof is identical to that of the stochastic shortest path problem, given Lemmas \ref{lem_policy_conv} and \ref{lem_stochastic_approx} hold for SSP games, which can be easily shown.

\subsection{Aggregation}


In some problems with large state spaces, we may have additional knowledge of the structure of the problem, which we can use to allow our algorithm to converge more quickly. One of the simplest structural assumptions we can make is to assume knowledge that several of the states share the same value function. Then, we should be able to ``aggregate'' our estimates of the value function for each of these clusters of states, reducing the size of the value function representation from $n$ to $k$, where $k$ is the number of clusters. In this way, aggregation acts as a very special case of linear function approximation, where we know apriori that $J^*(i) = \theta^T\phi(i)$ for some state-dependent vector $\phi(i)$ and some $\theta$. Proving the convergence of OPI for linear function approximation would be an interesting future extension.

For aggregation, we again assume a discounted MDP $X$ satisfying Assumptions \ref{assumption_common_transitions}-\ref{assumption_acyclic_transient}. We further assume that we have apriori a clustering of states into $k$ clusters where all the states in the same cluster have the same value function. We denote the $k$ clusters by $\mathcal{C}_1, \mathcal{C}_2, \ldots, \mathcal{C}_k,$ where $\cup_{i = 1}^k C_i = \mathcal{C}$. Then, formally, our assumption about the clusters is:
\begin{assumption}
    \label{assumption_clusters} For each cluster $\scriptC_c$ and each pair of states $i,j \in \scriptC_c$, $J^*(i) = J^*(j)$.
\end{assumption}

We define $J^*(\scriptC_c)=J^*(i)$ for all states $i \in \scriptC_c$. In order to show convergence, we need additional assumptions about the structure of the reachability graph. These assumptions are as follows: 
\begin{assumption}\label{assumption_aggregation_tree}
The Markov chain subgraph consisting of the transient states is acyclic. All states that are not transient are absorbing states. Further, we assume that all states in the same cluster have the same maximum distance to an absorbing state.
\end{assumption}
In other words, the states other than the absorbing states are our transient states and their Markov chain graph forms an acyclic graph. 

Because all clusters share the same optimal value, it is no longer necessary to visit every node in the cluster with positive probability to converge to the optimal value. Instead, all clusters must have positive probability of being visited under every policy. For each cluster $\scriptC_c \in \scriptC$, we define $q'_{\mu_t}(\scriptC_c )$ similarly to the quantity $q_{\mu_t}(i)$ for state $i$ in equation \eqref{q_mu_def}, as the probability of reaching cluster $\scriptC_c $ using policy $\mu_t$:
\begin{equation*}
    q'_{\mu_t}(\scriptC_c ) = P(X_k^{\mu_t} \in \scriptC_c  \text{ for some } k, 0 \leq k < \infty).
\end{equation*}
Then, we can relax Assumption \ref{assumption_reachability} to the following weaker assumption:
\begin{assumption}\label{assumption_reachability2}
    $q'_\mu(\scriptC_c ) > 0 \;\forall \mu, \scriptC_c  \in \mathcal{C}.$
\end{assumption}

We can modify our main algorithm in \eqref{eq:update0} to obtain a new algorithm that uses knowledge of the clusters to potentially yield faster convergence and lower storage complexity. Under the aforementioned assumptions, we will prove convergence of our modified algorithm.

At each time step $t$ we maintain a $k$-dimensional vector $\theta_t \in \mathbb{R}^k$, where each component $\theta_t(\scriptC_c )$ is the current estimate of the value function of states in cluster $\scriptC_c $. For all states $i \in \scriptC_c$ where $\scriptC_c  \in \mathcal{C},$ $\phi(i)=\mathbf{1}_c$, where $\mathbf{1}_c \in \mathbb{R}^k$ represents the vector of zeros with a $1$ in position $c$. Thus, for state $i \in \scriptC_c$, the current value function estimate $J_t(i)$ takes the form 
\begin{equation}
J_t(i) = \theta_t^T\phi(i)=\theta_t(c). \label{Jt_up_agg}
\end{equation}
At the start of each time step $t$, we pick an initial state using a non-uniform probability distribution $p$ (noting that assumption \ref{assumption_reachability2} holds). We calculate the greedy policy $\mu_t$ with respect to $J_t$ and simulate a trajectory $X^{\mu_t}_k$ following $\mu_t$, observing the costs incurred. The first time each state $i\in \scriptC_c $ visited, we calculate the tail costs from state $i$ and call it $\tilde{J}^{\mu_t}(c)$. Note that by our assumptions, each cluster is visited at most once in each time step. We then use $\tilde{J}^{\mu_t}(c)$ to update the estimate of the value function for cluster $\scriptC_c $,  $\theta_c$, using a cluster-dependent step size $\gamma_t(c)$:
\begin{equation}
\theta_{t+1}(c) =  
\begin{cases}
(1-\gamma_t(c))\theta_t(c) + \gamma_t(c)\tilde{J}^{\mu_t}(c) & \text{ if $i \in X_k^{\mu_t} $ for some $i \in \scriptC_c$}\\
\theta_t(c) & \text{ if $i \notin X_k^{\mu_t}$ for all $i \in \scriptC_c$}.
\end{cases} \label{eq:updateagg}
\end{equation}

We are now ready to state the convergence result, which establishes convergence to $J^*(i) \forall i \in \mathcal{C}_c$:
\begin{theorem}
	Suppose that assumptions \ref{assumption_common_transitions}, \ref{assumption_acyclic_transient}, \ref{assumption_clusters}, \ref{assumption_aggregation_tree}, and \ref{assumption_reachability2} hold. Then, the algorithm described in \eqref{eq:updateagg} with $\gamma_t(c) = \beta(n_t(c))$ or $\gamma_t(c) = \beta(t)$, where $\sum_{t=0}^\infty \beta(t) = \infty,$ and $\sum_{t=0}^\infty \beta^2(t) < \infty$,  converges almost surely to $J^*$. Here, $n_t(c)$ represents the number of times \textit{cluster} $\scriptC_c$ was ever reached by time $t$.
\end{theorem}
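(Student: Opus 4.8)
The plan is to mirror the transient-state argument of Section~\ref{Convergence for transient states}, but to run the induction over clusters rather than individual states. First I would use Assumption~\ref{assumption_aggregation_tree} to produce a reverse topological order on the clusters themselves. Assign to each state $i$ its longest-path distance $d(i)$ to an absorbing state in the reachability graph; acyclicity of the transient subgraph makes every reachability edge between distinct states strictly decrease $d$, the only self-loops being at the absorbing sinks. Because Assumption~\ref{assumption_aggregation_tree} forces all states of a cluster to share the same value of $d$, the induced quotient graph on clusters is acyclic, so I can order the clusters $\scriptC_{c_1}, \ldots, \scriptC_{c_k}$ so that a one-step transition out of $\scriptC_{c_a}$ can only land in some $\scriptC_{c_b}$ with $b < a$. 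The same strict-decrease property shows there are no intra-cluster edges and that the $d$-value along any trajectory strictly decreases until absorption, which is exactly what justifies the earlier claim that each cluster is visited at most once per trajectory---the fact that keeps $\tilde{J}^{\mu_t}(c)$ a clean single-sample estimator and preserves the martingale structure.

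With this order in hand I would induct on clusters. The base case is the level-zero clusters of absorbing states: there $P_{ij}(u) = \delta_{ij}$ for every action, so the greedy policy always selects $\arg\min_u c(i,u)$, which is optimal, and $\tilde{J}^{\mu_t}(c) = J^*(\scriptC_c)$ deterministically once the cluster is hit, making \eqref{eq:updateagg} a noiseless stochastic approximation that converges under the step-size conditions. For the inductive step, fix $\scriptC_c$ and assume $\theta_t(c') \cas J^*(\scriptC_{c'})$ for every earlier cluster $c'$. Every neighbor in $G$ of every state of $\scriptC_c$ lies in such an earlier cluster, so its value estimate converges to $J^*$; applying Lemma~\ref{lem_policy_conv} to each of the finitely many states of $\scriptC_c$, and---crucially---to every state downstream of $\scriptC_c$ (whose neighbors also sit in already-converged clusters), yields a finite random time $T^*(\scriptC_c)$ after which $\mu_t$ agrees with an optimal policy $\mu^*$ on $\scriptC_c$ and on everything reachable from it.

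Past $T^*(\scriptC_c)$, the tail cost accumulated from the first visited state $i \in \scriptC_c$ is generated under optimal play, so $\E[\tilde{J}^{\mu_t}(c) \mid \scriptF_{t-1}] = J^{\mu^*}(i) = J^*(\scriptC_c)$ on the visiting event. I would then reproduce the errata-style decomposition of the transient proof: introduce the cluster-level visitation noise $v_t(c)$ exactly as in \eqref{eq:updatewithv} but with $q'_{\mu_t}(\scriptC_c)$ in place of $q_{\mu_t}(i)$, and for each $n$ define the frozen recursion $Z_t^{(n)}$ targeting $J^*(\scriptC_c)$ directly as in \eqref{eq:standard}. Conditioned on $\{T^*(\scriptC_c) = n\}$, the iterate $\theta_t(c)$ coincides with $Z_t^{(n)}$ for $t \geq n$, so
\[
\P(\theta_t(c) \nrightarrow J^*(\scriptC_c)) = \sum_{n=1}^\infty \P(Z_t^{(n)} \nrightarrow J^*(\scriptC_c),\, T^*(\scriptC_c)=n) \leq \sum_{n=1}^\infty \P(Z_t^{(n)} \nrightarrow J^*(\scriptC_c)) = 0,
\]
the last equality being Lemma~\ref{lem_stochastic_approx}. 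To invoke that lemma I need the cluster-level analogues of Lemmas~\ref{lem_noise} and~\ref{lem_step_size}: the noise $w_t(c) + v_t(c)$ has zero conditional mean and bounded conditional second moment (costs are bounded, discounting is geometric, and Assumption~\ref{assumption_reachability2} together with finiteness of the policy set gives $q'_{\mu_t}(\scriptC_c) \geq \delta' > 0$), and the effective step size $q'_{\mu_t}(\scriptC_c)\gamma_t(c)$ satisfies \eqref{eq:effstep} almost surely---the $\gamma_t(c)=\beta(t)$ case is immediate and the $\gamma_t(c)=\beta(n_t(c))$ case reruns the Kronecker/martingale argument of Lemma~\ref{lem_step_size} verbatim with $n_t(c)$ and $\delta'$.

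I expect the main obstacle to be the two places where the clustering, rather than the individual states, must be shown to behave well: first, verifying that Assumption~\ref{assumption_aggregation_tree} genuinely makes the cluster quotient graph acyclic, so that the induction is well-founded and each cluster is hit at most once per trajectory; and second, the bookkeeping needed to certify that $\tilde{J}^{\mu_t}(c)$ is an unbiased sample of the common optimal value $J^*(\scriptC_c)$, which requires optimality of $\mu_t$ not merely on $\scriptC_c$ but on its entire downstream cone---this is precisely why $T^*(\scriptC_c)$ must be defined over all reachable states and not over $\scriptC_c$ alone. Everything else is a componentwise re-indexing of the already-established discounted transient-state argument.
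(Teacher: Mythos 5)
Your proposal is correct and follows essentially the same route as the paper's proof: an induction over clusters ordered by distance to the absorbing states (the paper's ``layers''), with the base case at the absorbing clusters, Lemma~\ref{lem_policy_conv} supplying a finite policy-stabilization time, the errata-style decomposition over the events $\{T=n\}$ via the frozen recursions $Z_t^{(n)}$, and cluster-level versions of Lemmas~\ref{lem_noise} and~\ref{lem_step_size} feeding into Lemma~\ref{lem_stochastic_approx}. Your one refinement---defining the stabilization time $T^*(\scriptC_c)$ over the entire downstream cone rather than over $\scriptC_c$ alone, so that $J^{\mu_t}$ genuinely equals $J^*$ on the cluster after that time---is a point the paper treats more loosely, and your version is the more careful statement of what the argument actually needs.
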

\begin{proof}
The proof is similar to the proof without state aggregation and proceeds by induction. Before performing the induction, however, it is helpful to rewrite the update \eqref{eq:updateagg} in the form of \eqref{eq:updatewithv}, with noise terms incorporating the randomness of the sampled trajectory in random variables $w_t$ and $v_t$. First, we define the zero-mean noise term $w_t(c)$ that incorporates noise from the trajectory for all clusters $\scriptC_c \in \scriptC$. Thus, our update becomes:
\begin{equation*}
	\theta_{t+1}(c) = \begin{cases}(1 - \gamma_t(c))\theta_t(c) + \gamma_t(c)(J^{\mu_t}(i) + w_t(i)) & \text{ if $i \in X_k^{\mu_t} $ for some $i \in \mathcal{C}_c$}\\
	\theta_t(c) & \text{ if $i \notin X_k^{\mu_t}$ for all $i \in \mathcal{C}_c$},
	\end{cases}
\end{equation*}
Note that if state $i$ is never reached by the trajectory, $w_t(i)$ is defined to be 0. Now, we can further define $v_t(i)$ to capture the randomness present in the event $i \in X_k^{\mu_t}$ and rewrite the above update as:
\begin{equation*}
	\theta_{t+1}(c) = \left(1 - \gamma_t(c) \sum_{i \in \mathcal{C}_c} q_{\mu_t}(i)\right) \theta_t(c) + \gamma_t(c) \sum_{i \in \mathcal{C}_c} q_{\mu_t}(i) \left( J^{\mu_t}(i) + w_t(i) + v_t(i) \right),
\end{equation*}
where
\begin{equation*}
	v_t(i) := \left( \frac{\I_{i \in X_k^{\mu_t}}}{q_{\mu_t}(i)} - 1 \right)(-\theta_t(c) + J^{\mu_t}(i) + w_t(i)).
\end{equation*}
The key to the proof is the observation that when $J^{\mu_t}(i)$ is the same for every node $i \in \mathcal{C}_c$, i.e. $J^{\mu_t}(i)=J^{\mu_t}(j) \forall i, j \in \mathcal{C}_c,$ the above update becomes:
\begin{equation}
	\theta_{t+1}(c) = (1 - q'_{\mu_t}(c)\gamma_t(\ell))\theta_t(c) + q'_{\mu_t}(c)\gamma_t(c)\left(J^{\mu_t}(c) + w_t(c) + v_t(c)\right), \label{eq:updateaggv}
\end{equation}
where $q'_{\mu_t}(c)=\sum_{i \in \mathcal{C}_c} q_{\mu_t}(i)$, $w_t(c) = \sum_{i \in \mathcal{C}_c}\frac{q_{\mu_t}(i)}{\sum_{i \in \mathcal{C}_c} q_{\mu_t}(i)} w_t(i)$, and $v_t(c) = \sum_{i \in \mathcal{C}_c}\frac{q_{\mu_t}(i)}{\sum_{i \in \mathcal{C}_c} q_{\mu_t}(i)} v_t(i)$. This is in the standard stochastic approximation form, as $w_t$ and $v_t$ are zero-mean.
Furthermore, when $\mu_t(i)=\mu^*(i)$ for all states $i \in \mathcal{C}_c,$ we have that $J^{\mu_t}(i)=J^{\mu_t}(j)=J^*(\scriptC_c)$ for all $i,j\in \mathcal{C}_c$ and our update becomes:
\begin{equation}
	\theta_{t+1}(c) = (1 - q'_{\mu_t}(c)\gamma_t(\ell))\theta_t(c) + q'_{\mu_t}(c)\gamma_t(c)\left(J^*(\scriptC_c) + w_t(c) + v_t(c)\right), \label{eq:updateaggv_opt}
\end{equation}
which implies that $\theta_t(c)\to J^*(\scriptC_c)$ for cluster $\scriptC_c$. We can now proceed with the proof by induction. In the induction, we consider ``layers'' of the Markov chain graph, where a layer consists of a set of states with the same maximum distance to the abosrbing states. Our inductive hypothesis is that $J_t(i) \cas J^*(\scriptC_{c_1})$ for $i \in \scriptC_{c_1}$, where $\scriptC_{c_1}$ is any cluster in a given layer of the Markov chain graph. We then show via induction that $J_t(i) \cas J^*(\scriptC_{c_2})$ for $i \in \scriptC_{c_2}$, where $\scriptC_{c_2}$ is a cluster in the layer above the layer containing $\scriptC_{c_1}$. First, we show convergence for the clusters containing absorbing states. Note that for the absorbing states, $\mu_t(i)=\mu^*(i).$

Now, consider a cluster containing absorbing states, $\scriptC_a$. We get that $J^{\mu_t}(i) = J^*(\scriptC_a) \forall i \in \mathcal{C}_a$, and we can use the update in \eqref{eq:updateaggv_opt} to determine convergence of $\theta_t(a)$ which corresponds to cluster $\scriptC_a$ as follows: 
\begin{equation*}
\theta_{t+1}(a) = (1 - q'_{\mu_t}(a)\gamma_t(a))\theta_t(a) + q'_{\mu_t}(a)\gamma_t(a)(J^*(\scriptC_{\scriptA}) + w_t(a) + v_t(a)).
\end{equation*} 
Using Lemma~\ref{lem_stochastic_approx}, we can then easily show that $\theta_t(a) \cas J^*(\scriptC_{\scriptA}),$ which implies from \ref{Jt_up_agg} that $J_t(i) \cas J^*(\scriptC_{\scriptA}) = J^*(i)$ for all $i \in \scriptC_{\scriptA}$.

Now, for the induction, consider a layer $\ell$ and assume that $\theta_t(c) \cas J^*(\scriptC_c)$ for all clusters $\scriptC_c$ in layers ``below'' layer $\ell$. Consider a cluster in layer $\ell$, say, $\scriptC_{\ell}$. From \ref{Jt_up_agg}, we can see that $J_t(i)\to J^*(\scriptC_c)$ for all clusters $\scriptC_c$ in layers below $\ell.$ Then, by Lemma~\ref{lem_policy_conv}, there is some minimum finite time $T(\scriptC_{\ell})$ for which $\mu_t(i) = \mu^*(i)$ for all $i \in \scriptC_{\ell}$, $t \geq T(\scriptC_{\ell})$. Let $A_n(\scriptC_{\ell})$ be the event that $T(\scriptC_{\ell}) = n$ for $n \in \N$. We proceed analogously to the procedure in Section~\ref{Convergence for transient states}.

We define a sequence $Z_t^{(n)}$ for every integer $n \geq 0$, such that $Z_n^{(n)} = \theta_n(\ell)$ and
\begin{equation}
    Z_{t+1}^{(n)} = (1 - q'_{\mu_t}(\ell)\gamma_t(\ell))Z_t^{(n)} + q'_{\mu_t}(\ell)\gamma_t(\ell)\left(J^*(C_{\ell}) + w_t(\ell) + v_t(\ell)\right),
\end{equation}
for $t \geq n$. Notice that conditioned on the event $A_n(\scriptC_\ell)$, for all $t \geq n$, $Z_t^{(n)}=\theta_t(\ell)$. Using Lemma~\ref{lem_stochastic_approx}, we can show that $Z_t^{(n)} \cas J^*(C_{\ell})$ for all $n \geq 0$. Therefore, as in Section~\ref{Convergence for transient states}, we have
\begin{align*}
    \P(\theta_t(\ell) \nrightarrow J^*(C_{\ell})) &= \sum_{n=1}^\infty \P(\theta_t(\ell) \nrightarrow J^*(C_{\ell})), A_n(C_{\ell}))\\
    &= \sum_{n=1}^\infty \P(Z_t^{(n)} \nrightarrow J^*(C_{\ell}), A_n(C_{\ell}))\\
    &\leq \sum_{n=1}^\infty \P(Z_t^{(n)} \nrightarrow J^*(C_{\ell}))\\
    &= 0.
\end{align*}

Thus, $\theta_t(\ell) \cas J^*(C_{\ell})$ which means that the induction holds and that $\theta_t(i)\cas J^*(i) \forall i \in S.$
\end{proof}

\section{Conclusions}
In this paper, we presented a proof of convergence for an extension of an optimistic policy iteration algorithm presented in \cite{tsitsiklis2002convergence} in which the value functions of multiple states (visited according to a greedy policy) are updated in each iteration. We present simulation results which show that such an update scheme can speed up the convergence of the algorithm. 
We extended the results to the following cases, (i) stochastic shortest-path problems, (ii) zero-sum games, and (iii) aggregation.
To prove our result, we assumed that each stationary policy induces the same graph for its underlying Markov chain and the Markov chain graph has the following property: if each recurrent class is replaced by a single node, then the resulting graph is acyclic. An interesting future direction for research is to relax these assumptions.

\section{Acknowledgements}

The research presented here was supported in part by a grant from Sandia National Labs\footnote{Sandia National Laboratories is a multimission laboratory managed and operated by National Technology \& Engineering Solutions of Sandia, LLC, a wholly owned subsidiary of Honeywell International Inc., for the U.S. Department of Energy’s National Nuclear Security Administration under contract DE-NA0003525.  This paper describes objective technical results and analysis. Any subjective views or opinions that might be expressed in the paper do not necessarily represent the views of the U.S. Department of Energy or the United States Government.} and the NSF Grant CCF 1934986.

\newpage
\bibliographystyle{unsrt}
\bibliography{refs}

\end{document}